\newcommand*\samethanks[1][\value{footnote}]{\footnotemark[#1]}
\definecolor{oxprimary}{HTML}{002147}
\definecolor{oxsecondry}{HTML}{a79d96}
\definecolor{oxtertiary}{HTML}{f3f1ee}
\definecolor{oxlightprimary}{HTML}{122f53}
\definecolor{oxverylightblue}{HTML}{f0f5f8}
\definecolor{oxblack}{HTML}{000000}
\definecolor{oxveryoffblack}{HTML}{333333}
\definecolor{oxmidgrey}{HTML}{7a736e}
\definecolor{oxdarkgrey}{HTML}{a6a6a6}
\definecolor{oxlightgrey}{HTML}{e0ded9}
\definecolor{oxvlightgrey}{HTML}{f9f8f5}
\definecolor{oxwhite}{HTML}{ffffff}
\definecolor{tabblue}{HTML}{1f77b4}
\definecolor{taborange}{HTML}{ff7f0e}
\definecolor{tabgreen}{HTML}{2ca02c}
\definecolor{tabred}{HTML}{d62728}
\definecolor{tabpurple}{HTML}{9467bd}
\definecolor{tabbrown}{HTML}{8c564b}
\definecolor{tabpink}{HTML}{e377c2}
\definecolor{brightgrey}{HTML}{f7f7f7}
\crefname{theorem}{Theorem}{Theorems}
\Crefname{theorem}{Theorem}{Theorems}
\crefname{proposition}{Proposition}{Propositions}
\Crefname{proposition}{Proposition}{Propositions}
\crefname{lemma}{Lemma}{Lemmas}
\Crefname{lemma}{Lemma}{Lemmas}
\crefname{corollary}{Corollary}{Corollaries}
\Crefname{corollary}{Corollary}{Corollaries}
\crefname{definition}{Definition}{Definitions}
\Crefname{definition}{Definition}{Definitions}
\crefname{assumption}{Assumption}{Assumptions}
\Crefname{assumption}{Assumption}{Assumptions}
\crefname{remark}{Remark}{Remarks}
\Crefname{remark}{Remark}{Remarks}
\newenvironment{talign*}
 {\csname align*\endcsname}
 {\endalign}
\def\x{\mathbf{x}}
\def\bfx{\mathbf{x}}
\def\rset{\mathbb{R}}
\def\E{\mathbb{E}}
\def\rmd{\mathrm{d}}
\def\bfY{\mathbf{Y}}
\def\bfB{\mathbf{B}}
\def\bfZ{\mathbf{Z}}
\def\calF{\mathcal{F}}
\def\calD{\mathcal{D}}
\newcommand{\coint}[1]{\left[#1\right)}
\newcommand{\ball}[2]{\operatorname{B}(#1,#2)}
\def\bfa{\mathbf{a}}
\def\mcx{\mathcal{X}}
\newcommand{\PE}{\mathbb{E}}
\newcommand{\expe}[1]{\mathbb{E}[#1]}
\def\Id{\operatorname{Id}}
\newcommand{\ccint}[1]{\left[#1\right]}
\def\nset{\mathbb{N}}
\newcommand{\ensembleLigne}[2]{\{#1\,:\eqsp #2\}}
\newcommand{\Pens}{\mathscr{P}}
\def\rmS{\mathrm{S}}
\def\calX{\mathcal{X}}
\newcommand{\expeLigne}[1]{\PE [ #1 ]}
\def\rmc{\mathrm{C}}
\newcommand{\rme}{\mathrm{e}}
\newcommand{\R}{\mathbb R}
\def\eqsp{}
\newcommandx{\normLigne}[2][1=]{\ifthenelse{\equal{#1}{}}{\Vert #2 \Vert}{\Vert #2\Vert^{#1}}}
\def\rmK{\mathrm{K}}
\def\rmT{\mathrm{T}}
\newcommandx{\KL}[2]{\operatorname{KL}( #1 | #2 )}
\newcommand{\argmin}{\operatornamewithlimits{argmin}}
\def\rmL{\mathrm{L}}
\def\vareps{\varepsilon}
\theoremstyle{plain}
\newtheorem{theorem}{Theorem}[section]
\newaliascnt{proposition}{theorem}
\newtheorem{proposition}[proposition]{Proposition}
\newaliascnt{lemma}{theorem}
\newaliascnt{corollary}{theorem}
\theoremstyle{definition}
\newaliascnt{definition}{theorem}
\newaliascnt{assumption}{theorem}
\theoremstyle{remark}
\newaliascnt{remark}{theorem}
\newtheorem{remark}[remark]{Remark}
\newcommand{\appendixhead}{
  \centerline{\textbf{\LARGE Supplementary to: }\vspace{0.15in}}
  \centerline{\textbf{\LARGE Spectral Diffusion Processes}\vspace{0.25in}}
}
\titlespacing*{\paragraph}{0pt}{0pt}{0.5em}
\titlespacing*{\section}{0pt}{*0.8}{*0.6}
\titlespacing*{\subsection}{0pt}{*0.7}{*0.5}
\title{Spectral Diffusion Processes}
\author{\textbf{Angus Phillips}\thanks{Correspondence to: \texttt{angus.phillips@stats.ox.ac.uk}.}\;\;\thanks{Dept. of Statistics, University of Oxford, Oxford, UK.}\ ,\; \textbf{Thomas Seror}\thanks{Dept. of Computer Science ENS, CNRS, PSL University Paris, France.}\ ,\; \textbf{Michael Hutchinson}\samethanks[2]\ ,\; \textbf{Valentin De Bortoli}\samethanks[3]\ , \\ \textbf{Arnaud Doucet}\samethanks[2]\ \textbf{,}\;  \textbf{\'Emile Mathieu}\thanks{Dept. of Engineering, University of Cambridge, Cambridge, UK.} }
\begin{document} 
\maketitle
\setcounter{footnote}{0}

\begin{abstract}
  \looseness=-1 Diffusion models have proven to be a flexible and effective framework for modelling probability distributions on finite-dimensional spaces. However, many physical modelling problems such as time series are naturally described over function spaces. In this work we apply diffusion models to such stochastic processes. To do so we consider a spectral representation of the data, obtained using a kernel, thereby dissociating the stochastic part of the processes from their space-time structure. As a result, the stochasticity of the processes is entirely encoded in the spectral coefficients, which we truncate and model using standard finite-dimensional diffusion models. By truncating the representation in the spectral domain we ensure our resulting model defines valid stochastic processes, thereby naturally satisfying consistency and exchangeability criteria. Projecting our spectral diffusion models back to the original input space, we show that for any given marginals our approach corresponds to a diffusion model with \emph{correlated} noise, with explicit covariance matrix given by the kernel. We demonstrate our method's effectiveness for modelling various multimodal datasets as well as conditional sampling by amortising our models with respect to a context set.
\end{abstract}

\section{Introduction}
\label{sec:introduction}

Probabilistic modelling over functional spaces is of high importance in the physical sciences where objects of interest are often \emph{fields}, mapping (space-time) inputs to tensors such as scalars or vectors. Examples range from weather forecasting~\citep{ravuri2021Skilful} and prediction of electrical potentials~\citep{yang2020physics}, to epidemiology \citep{wu2022Multifidelity}. The desire to capture uncertainty motivates a \emph{probabilistic} treatment of these phenomena, therefore requiring the development of flexible parametric probabilistic models on functional spaces. Of particular importance is the modelling of \emph{conditional} distributions given a set of observations, such as the probability distribution over paths of a hurricane given its recent trajectory \citep{giffard-roisin2021Deep}.

However, two challenges arise when constructing parametric probabilistic models over functional spaces. The first is a problem of validity: finite representations do not necessarily imply \emph{valid} distributions over stochastic processes---they must satisfy the consistency and exchangeability requirements of the Kolmogorov extension theorem \cite{charalambos2013infinite} in order to do so. Of the two, consistency is harder to achieve in practice, yet without it the model's predictive distribution at an arbitrary location depends on the resolution of the discretisation. The second is a problem of flexibility: parametric distributions over function spaces must have the flexibility to capture complex non-Gaussian, multimodal behaviour. Amongst the existing suite of approaches, these requirements are typically at odds with each other. In particular, a well-established framework for modelling stochastic processes is Gaussian process (GP) regression~\citep{rasmussen2003gaussian}, which allows for closed form posterior prediction on functional spaces. GPs place Gaussian distributions on the finite dimensional marginals of the process, with the covariance structure specified by an arbitrary kernel. While their Gaussian construction means that GPs satisfy consistency and exchangeability, that same construction restricts the flexibility of the resulting model. Furthermore, specification of complex kernel functions is challenging in practice and their computational cost of $\mathcal{O}(N^3)$ in the number of data points $N$ is often prohibitive. Neural processes (NPs)~\citep{garnelo2018neural,garnelo2018Conditional} aim to improve the flexibility of GPs by meta-learning the finite dimensional marginals directly from data using deep neural networks. This family of models has been successfully applied on a wide variety of probabilistic modelling tasks, such as temperature and precipitation predictions \citep{vaughan2022Convolutional,foong2020MetaLearning}, epidemiology modelling \citep{wu2022Multifidelity} and photometric time series modelling \citep{gordon2019Convolutional}. However, in exchange for increased flexibility NPs only satisfy \emph{conditional} consistency, and therefore they do not define a valid distribution over unconditional stochastic processes.

\begin{figure}[t]
    \centering
    \includegraphics[width=.82\textwidth]{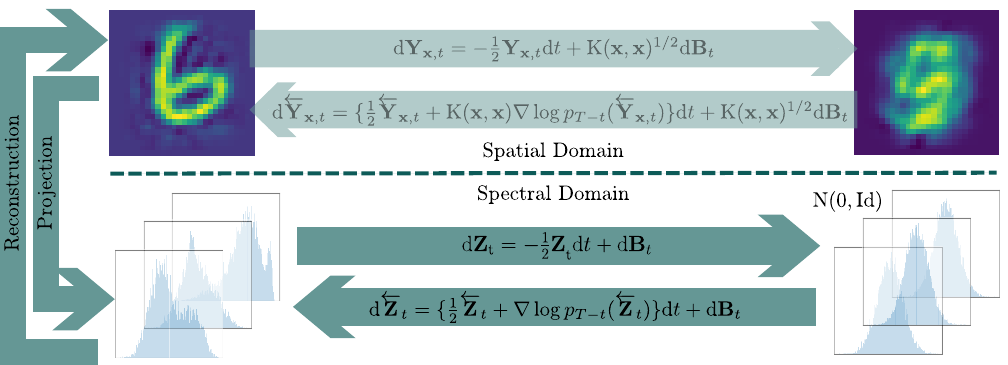}
    \caption{Illustration of our methodology. Stochastic processes are parameterised via a functional basis with stochastic coefficients. A diffusion model is learnt on spectral coefficients $\bfZ$, with the processes $(\bfZ_t)_{t\geq0}$ and $(\overleftarrow{\bfZ})_{t\geq0}$ in spectral space inducing the spatial processes $(\bfY_t)_{t\geq0}$ and $(\protect\overleftarrow{\bfY}_t)_{t\geq0}$ via the reconstruction with the functional basis. In the spatial domain $p_t$ is the density of $\bfY_{x,t}$ and in the spectral domain $p_t$ is the density of $\bfZ_t$.}
    \label{fig:killer_plot}
\end{figure}

Denoising diffusion models \citep{song2019generative,song2020score,ho2020denoising,nichol2021beatgans} have recently been introduced as a new powerful paradigm for generative modelling.  They work as follows: noise is progressively added to data following a Stochastic Differential Equation (SDE)---the forward \emph{noising} process---until it is approximately Gaussian. The generative model is given by an approximation of the associated \emph{time-reversed process} called the \emph{denoising} process. Recent works have extended denoising diffusion models to model distributions over functions \citep{dutordoir2022neural,kerrigan2022Diffusion,pidstrigach2023infinite,lim2023score,bond2023infty,hagemann2023multilevel,franzese2023continuous}. While these distributions are extremely flexible, infinite-dimensional diffusion approaches are discretised in practice and the resulting finite-dimensional models do not satisfy consistency. 

To address these shortcomings, we present a flexible probabilistic model which is \emph{by construction} a valid distribution over stochastic processes. We do so by decomposing stochastic processes onto an eigenbasis associated with a kernel $\rmK$, thereby decoupling the \emph{stochastic} part of the process from its \emph{space-time} part. This gives a \emph{countable} representation of an \emph{uncountable} problem (on a compact space), which we in turn restrict to a finite representation by a principled truncation. We finally learn a flexible probability distribution over the finite stochastic coefficients with a denoising diffusion model. Our key insight is that, by choosing to truncate the representation in the eigenbasis rather than the input space, we obtain a finite parameterisation which naturally defines a distribution over stochastic processes, from which consistency and exchangeability are trivial consequences. Projecting our diffusion model on spectral coefficients back into the original space we show that, for any given spatial input, our process corresponds to a diffusion model with \emph{correlated} noise, thereby drawing a bridge with existing approaches such as \citet{kerrigan2022Diffusion,lim2023score}. We further tackle conditional modelling tasks by taking inspiration from neural processes and amortising the score network over conditioning observations with a permutation invariant encoder. Our methodology is illustrated in \Cref{fig:killer_plot} and presented in more detail in \cref{sec:method}. We empirically demonstrate the flexibility of our approach on several unconditional and predictive modelling tasks in \cref{sec:experiments}.

\section{Background}
\label{sec:background}

\paragraph{Stochastic processes.}
In this work we are interested in modelling distributions over functions, which we formalise through the framework of stochastic processes.  Given a compact input space $\calX$, an $\rset^d$-valued stochastic process $(\bfY_x)_{x \in \calX}$ is a collection of random variables $\bfY_x$ taking values in \(\rset^d\).\footnote{In this work, we focus on \emph{continuous} stochastic processes, meaning any trajectory of $(\bfY_x)_{x \in \calX}$ belongs to $\rmc(\calX, \rset^d)$ so that $\omega \mapsto (\bfY_x(\omega))_{x \in \calX}$ is measurable and we denote $\pi \in \mathcal{P}(\rmc(\calX, \rset^d))$ its distribution.} Typically, existing models of stochastic processes parameterise \emph{finite dimensional marginals} $\ensembleLigne{\bfY_{x_i}}{i \in \{1, \dots, n\}, \ x_i \in \calX}$ for every $n \in \nset$, see~\citet{garnelo2018neural} for example. However, such a collection of probability distributions $\rmS = \ensembleLigne{\pi_{x_1, \dots, x_n}}{(x_1, \dots, x_n) \in \calX^n, \ n\in \nset}$ is not guaranteed to define a distribution $\pi \in \mathcal{P}((\rset^d)^\calX)$\footnote{$\mathcal{P}((\rset^d)^\calX)$ is the space of distributions over the space $(\rset^d)^\calX$.} over stochastic processes. The conditions under which this is the case are given by the \emph{Kolmogorov extension theorem}.

\begin{theorem}[Kolmogorov Extension Theorem \citep{charalambos2013infinite}]
\label{thm:ket}
  Given a collection of finite dimensional marginals $\rmS = \ensembleLigne{\pi_{x_1, \dots, x_n}}{(x_1, \dots, x_n) \in \calX^n, n \in \nset}$ over the index set \(\calX\), there exists a unique measure $\pi$ over $(\rset^d)^\calX$ whose marginals match $\rmS$ if and only if the collection of marginals satisfies the following conditions:
    \begin{enumerate}[label=\alph*),wide]
    \item \emph{exchangeability:} for any $n \in \nset$, $n$-permutation $\sigma \in \mathfrak{S}_n$, $(x_1, \dots, x_n) \in \calX^n$ and continuous and bounded function $f \in \rmc_b((\rset^d)^n)$, we have
    \begin{align*}
        \textstyle \int f(y_{\sigma^{-1}(1)},
        \dots, y_{\sigma^{-1}(n)}) \rmd \pi_{x_{\sigma(1)}, \dots, x_{\sigma(n)}}(y_1, \dots,
        y_n)
         = \int f(y_{1}, \dots, y_{n}) \rmd \pi_{x_1,
          \dots, x_n}(y_1, \dots, y_n) .
    \end{align*}
        \item \emph{consistency:} for any $n_1 \leq n_2$, $(x_1, \dots, x_{n_2}) \in \calX^{n_2}$ and $f \in \rmc_b((\rset^d)^{n_1})$ we have
    \begin{align*}
    \textstyle \int f(y_1, \dots, y_{n_1}) \rmd \pi_{x_1, \dots, x_{n_1}}(y_1, \dots, y_{n_1})
     = \int f(y_1, \dots, y_{n_1}) \rmd \pi_{x_1, \dots, x_{n_2}}(y_1, \dots, y_{n_2}).
    \end{align*}
  \end{enumerate}
\end{theorem}
Existing functional generative modelling approaches \citep[e.g.][]{dutordoir2022neural,kerrigan2022Diffusion} often satisfy the exchangeability condition but typically fall short on consistency, while neural processes~\citep{garnelo2018neural} only satisfy consistency in specific settings. In this work we bridge this gap by developing a \emph{spectral} method which yields a valid generative model of a stochastic process \emph{by definition}. 

\paragraph{Denoising diffusion models.} \label{sec:SGMs}
We briefly recall the concepts behind denoising diffusion models on the Euclidean
space $\rset^d$. Let $(\bfZ_t)_{t \geq 0}$ be a \emph{noising} process defined by the
following SDE:
\begin{equation}\label{eq:forward_SDE}
  \rmd \bfZ_t = -\tfrac{1}{2}\beta_t \bfZ_t \,\rmd t + \sqrt{\beta_t}\,\rmd \bfB_t,
  \quad \bfZ_0 \sim p_0,
\end{equation}
with $(\bfB_t)_{t \geq 0}$ a $d$-dimensional Brownian motion, $p_0$ the data
distribution on $\rset^d$, and $\beta_t > 0$ a noise schedule controlling the rate at
which noise is injected. In the infinite time limit this process converges towards
the unit Gaussian distribution $\mathrm{N}(0, \Id)$. Note that here we describe the
VP-SDE framework~\citep{song2020score} (the continuous time limit of
DDPM~\citep{ho2020denoising}), but changing the forward process yields alternative
frameworks~\citep{song2020score}. Under conditions on $p_0$, the time-reversal
$(\overleftarrow{\bfZ}_t)_{t \in \ccint{0,T}} = (\bfZ_{T-t})_{t \in \ccint{0,T}}$ is
given by \citet{cattiaux2021time,haussmann1986time}
\begin{equation}
\label{eq:backward-SDE}
  \rmd \overleftarrow{\bfZ}_t = \beta_{T-t}\{ \tfrac{1}{2}\overleftarrow{\bfZ}_t
    + \nabla \log p_{T-t}(\overleftarrow{\bfZ}_t)\} \,\rmd t
    + \sqrt{\beta_{T-t}}\,\rmd \bfB_t, \quad \overleftarrow{\bfZ}_0 \sim p_T,
\end{equation}
with $p_t$ the density of $\bfZ_t$. Denoising diffusion models are defined to sample
approximately from \eqref{eq:backward-SDE}. To do so, we replace the finite time
distribution $p_T$ by $\mathrm{N}(0,\Id)$ and approximate the intractable Stein score
$\nabla \log p_t$ by a parametric function $s_\theta(t, \cdot)$ which is trained using
score matching techniques~\citep{hyvarinen2005estimation,vincent2011connection,song2019Sliced}.
With these approximations in place, an Euler--Maruyama discretisation of
\eqref{eq:backward-SDE} is applied using a step-size $\gamma > 0$ and $N=T/\gamma$:
\begin{equation*}
\label{eq:backward_discrete_final}
 \tilde{\bfZ}_{n+1} = \tilde{\bfZ}_n + \gamma\, \beta_{T-n\gamma} \{\tfrac{1}{2}\tilde{\bfZ}_n
   + s_\theta(T -n \gamma, \tilde{\bfZ}_n) \} + \sqrt{\gamma\,\beta_{T-n\gamma}}\, \mathbf{G}_{n+1},
   \ \tilde{\bfZ}_0\sim \mathrm{N}(0,\Id),\ \mathbf{G}_{n+1} \overset{\textup{i.i.d.}}{\sim} \mathrm{N}(0,\Id).
\end{equation*}
Simulating this recursion yields approximate samples $\tilde{\bfZ}_N$ from the data
distribution $p_0$.

\section{Spectral Diffusion Processes}
\label{sec:method}

Our aim is to define a generative model for stochastic processes that is both \emph{valid} and fully flexible---capable, for instance, of capturing non-Gaussian, multimodal behaviour in function space. Achieving both these goals has thus far been elusive due to the inherent difficulty that a stochastic process on a compact input space $\mathcal{X}$ is an uncountable object that cannot be represented or learned directly; any practical model must resort to a finite representation of it. Typically this is achieved by discretising the process and modelling its finite-dimensional marginals. Such a choice is expressive, but the resulting marginals are typically not coherent, thus violating the \emph{consistency} condition of the Kolmogorov extension theorem (KET) and failing to define a valid process.

Our key insight is that the limitations of existing approaches lie not in using a finite representation, but in \emph{which} representation is chosen. Rather than discretising in the input space, we propose to represent the process through its coefficients in a countable functional basis and we perform truncation in this countable basis. Importantly, the truncated object remains a valid stochastic process by definition, while flexibility is inherited from the generative model on stochastic coefficients. This is made precise in the following section: we obtain the basis via spectral theory (\Cref{sec:mercer}) and we model the truncated coefficients with a diffusion model (\Cref{sec:model}), ultimately resulting in a valid and principled generative model for stochastic processes. 

\subsection{Spectral Representation of Stochastic Processes}
\label{sec:mercer}

\paragraph{Mercer's theorem.} Our approach is based on constructing a spectral representation of a stochastic process. We use Mercer's theorem (\cref{thm:mercer}), which enables a \emph{countable} representation of an \emph{uncountable} process. We start by recalling this central result.\footnote{We present our methodology for processes taking values in $\mathbb{R}$ for simplicity, but note that our derivations extend to the $\mathbb{R}^d$ case, see Appendix~\ref{app:rd_processes}.}

\begin{theorem}[Mercer's Theorem \citep{ferreira2009eigenvalues}]
\label{thm:mercer}
  Let $\mathcal{X}$ be a compact metric space and let
  $\rmK: \ \mcx \times \mcx \to \rset$ be a continuous, symmetric, positive definite kernel, i.e. for any $n \in \nset$ and $\{x_i\}_{i=1}^n \in \calX^n$ we have $c^\top \rmK(\{x_i\}_{i=1}^n,\{x_i\}_{i=1}^n)c > 0$ for all non-zero $c \in \mathbb{R}^n$. 
  Define a self-adjoint compact operator $\rmT_{\rmK}$ on $\mathrm{L}^2(\calX)$ by
  \begin{equation} \label{eq:operator_kernel_definition} \textstyle
    \rmT_{\rmK} [f](\cdot) = \int_\calX \rmK(x,\cdot) f(x) \rmd
    x\end{equation} for any $f \in \mathrm{L}^2(\calX)$. Then, there exist
  $(e_m)_{m \in \nset} \in (\mathrm{L}^2(\calX))^\nset$ and
  $(\lambda_m)_{m \in \nset} \in (0, \infty)^\nset$ such that $(e_m)_{m \in \nset}$ is
  an orthonormal basis of $\mathrm{L}^2(\calX)$ and
  $(e_m, \lambda_m)_{m \in \nset}$ is an eigensystem for $\rmT_\rmK$. Furthermore, $(e_m)_{m \in \mathbb{N}}$ consists of continuous functions only and the expansion
  \begin{equation}
      \textstyle \rmK(x, x') = \sum_{m=0}^\infty \lambda_m e_m(x) e_m(x')
  \end{equation}
  converges absolutely and uniformly on $\mathcal{X}\times \calX$. 
\end{theorem}

We now leverage the basis provided by Mercer's theorem to represent stochastic processes in spectral space. We will henceforth assume the stochastic process $(\bfY_x)_{x \in \mathcal{X}}$ has finite second moment, $\expeLigne{\normLigne{\bfY_x}^2} < +\infty$, and without loss of generality we also assume $\expeLigne{\bfY_x} = 0$ (the mean may be subtracted otherwise). Finiteness of the second moment ensures each realisation of the process lies in $L^2(\calX)$, therefore we can exactly expand $\bfY_x$ using the orthonormal basis $(e_m)_{m \in \nset}$, 
\begin{equation}
  \textstyle
  \bfY_x = \sum_{m=0}^{\infty} \lambda_m^{1/2} Z_m\, e_m(x), \qquad
  Z_m = \lambda_m^{-1/2} \int_\calX \bfY_x e_m(x)\, \rmd x ,
\end{equation}
with the series converging in $L^2(\calX)$. The scaling $\lambda_m^{1/2}$ is a simple normalisation of the coefficients which we justify shortly. We remark that this representation is exact but infinite-dimensional. To now obtain a \emph{finite} representation we truncate the expansion at order $M$, defining
\begin{equation} \label{eq:parameterization}
  \textstyle \bfY^M_x \triangleq \sum_{m=0}^M \lambda_m^{1/2} Z_m\, e_m(x) .
\end{equation}
This truncation is principled because $\rmT_\rmK$ is trace-class---$\sum_m \lambda_m = \int_\mathcal{X}\rmK(x, x) \, \rmd x < + \infty$---and $\lambda_m \to 0$~\citep{ferreira2009eigenvalues}, meaning the discarded tail contains an arbitrarily small fraction of the spectral energy.
  
Using the spectral representation \eqref{eq:parameterization}, we dissociate the \emph{stochastic} part of the process---embedded in the stochastic coefficients $\{Z_m\}_{m=0}^{M} \in \rset^{M+1}$---from its \emph{space-time} part---embedded in the deterministic basis $\{e_m\}_{m=0}^M \in (\mathrm{L}^2(\calX))^{M+1}$. We believe this parameterisation is particularly appealing since (a) Mercer's kernels define Reproducing Kernel Hilbert Spaces (RKHS) which can be rich functional spaces (e.g.\ universal RKHS are dense in the space of bounded continuous functions \citep[Section 4.6]{steinwart2008support}), (b) prior knowledge of the data process can be incorporated in the choice of kernel (e.g.\ a kernel's bandwidth regulates the smoothness of functions). Our approach is therefore expressive and encompasses a variety of popular functional bases, such as the Fourier basis or wavelet bases.

\paragraph{Karhunen-Lo\`{e}ve extension.} While in practice we implement our method using general continuous kernels via Mercer's theorem, choosing the \emph{covariance kernel} $\rmK_\bfY(x, x') = \mathbb{E}\left[\bfY_x \bfY_{x'}\right]$ provides additional guarantees via the Karhunen-Lo\`{e}ve theorem, which we state fully in Appendix~\ref{app:KL_theorem}. In particular, under the covariance kernel the spectral coefficients satisfy $\operatorname{Cov}(Z_m, Z_{m'}) = \delta_{m, m'}$ and
\begin{equation}
\label{eqn:kl_decomp_y}
\textstyle
{\lim\limits_{M \to +\infty} \mathbb{E}[\sup_{x \in \calX} \|\bfY_x - \sum_{m=0}^M {\lambda_m}^{1/2} Z_m e_m(x)\|^2] = 0,
}
\end{equation}
giving stronger uniform-in-$x$ mean-squared convergence of our representation. 

\paragraph{Validity of the representation.} We emphasise that our finite representation in \Cref{eq:parameterization} is \emph{by definition} a stochastic process. In particular, for a fixed $x \in \calX$ it is a finite linear combination of the measurable $Z_m$ with \emph{deterministic} coefficients ${\lambda_m}^{1/2} e_m(x)$ and thus is itself a measurable function $\omega \mapsto Y_x(\omega)$. We therefore satisfy consistency and exchangeability \emph{for free}, by the trivial direction of the KET. For illustrative purposes, we also provide a proof of consistency and exchangeability in our setting in Appendix~\ref{app:consistency_exchangability_proof}. Our method therefore defines stochastic processes directly, whereas non-consistent approaches must learn consistency from the data~\citep{dutordoir2022neural}.

\subsection{Spectral Diffusion Model}
\label{sec:model}

The spectral representation $\bfY_x^M$ derived in \Cref{sec:mercer} reduces the problem of modelling a distribution over functions to that of modelling the distribution of the finite coefficient vector $\bfZ = (Z_0, \dots, Z_M)$, which we choose to do using a finite-dimensional diffusion model. When mapped back to function space through the basis, this induces a flexible model over stochastic processes. We refer to \Cref{fig:killer_plot} for an illustration of our methodology. Training the diffusion model requires a pre-processing step which converts a stochastic process dataset into spectral coefficients $\{\bfZ^s\}_{s=1}^S$ for the $S$ processes in the dataset---we detail the required computations in \Cref{sec:compute_bases}.

\paragraph{Spectral diffusion model.}
We model the distribution of the coefficient vector $\bfZ \in \rset^{M+1}$ with a standard finite-dimensional diffusion model. We use the VP-SDE formulation introduced in \Cref{sec:SGMs} to define the forward process $(\bfZ_t)_{t \in [0, T]}$ and reverse process $(\overleftarrow{\bfZ}_t)_{t \in [0, T]}$, where $p_0$ is the distribution of the spectral coefficients that corresponds to the distribution $\pi \in \mathcal{P}(\mathrm{C}(\calX, \mathbb{R}))$ of the stochastic process. Samples from the spectral dataset are distributed $\bfZ^s \sim p_0$ and are used to train the score network $\bm{s}_\theta: \rset^{M+1} \times [0,T] \rightarrow \R^{M+1}$ by minimising the denoising score matching (DSM) loss \citep{vincent2011connection}:
\begin{equation} \label{eq:dsm_standard}
\textstyle{\mathbb{E}_t \{ \mathbb{E}_{\bfZ_0, \bfZ_t} [ \|\bm{s}_\theta(\bfZ_t,t) - \nabla_{\bfZ_t} \log p(\bfZ_t|\bfZ_0) \|^2_2  ] \} \eqsp ,}
\end{equation}
with $t \sim U([0, T])$. Sampling the time-reversal initialised from $\overleftarrow{\bfZ}_0 \sim \mathrm{N}(0, \Id)$ yields coefficients $\overleftarrow{\bfZ}_T = (\overleftarrow{Z}_{0,T}, \dots, \overleftarrow{Z}_{M,T})$ approximately distributed according to $p_0$. When recombined with the basis, this defines our generative model in function space,
\begin{equation} \label{eq:reconstruction} \textstyle
  \overleftarrow{\bfY}^M_T = \sum_{m=0}^M \lambda_m^{1/2}\, \overleftarrow{Z}_{m,T}\, e_m ,
\end{equation}
which can be evaluated at any input $x \in \calX$. Unlike the Gaussian likelihoods of Gaussian and neural processes, the diffusion model imposes no parametric form on the coefficients, so it can capture arbitrary multimodal, non-Gaussian distributions. This is precisely the object we set out to construct: a flexible generative model over functions---inheriting the expressivity
of the diffusion---that is a valid stochastic process by construction (\Cref{sec:mercer}).

\paragraph{Conditional sampling.}
We additionally extend our approach to enable predictive modelling of the posterior $p(y^* \mid x^*, \{x^i, y^i\}_{i \in C})$ given a context set $\{x^i, y^i\}_{i \in C}$. Following the neural-process paradigm \citep{garnelo2018neural}, we amortise the score network over context sets, $\bm{s}_\theta(\cdot, t, c) \approx \nabla \log p_t(\cdot \mid \{x^i, y^i\}_{i \in C})$, where $c = \mathrm{enc}_\theta(\{x^i, y^i\}_{i \in C})$ is a permutation-invariant embedding of the context set.  In practice we use the self-attention architecture of \citet{dutordoir2022neural} as our encoder. Augmenting the score network with the embedding $c$ allows the coefficient distribution, and the induced distribution over processes, to adapt to the conditioning observations. However, similarly to neural processes, this approach does not explicitly enforce the predictive distribution to interpolate the context, that is $p(y^c \mid x^c, y^c)$ is not necessarily $\delta_{y^c}$. In practice the model is found to accurately learn this behaviour from data, giving $\E[\bfY_{x^c} \mid x^c, y^c] \approx y^c$ (\Cref{fig:conditional_gp_dataset}).

\subsection{Properties of Spectral Diffusion Processes}
\label{sec:method_properties}

Our generative model over stochastic processes is naturally defined in the \emph{spectral} space by modelling the coefficients of the stochastic process in the basis $(e_m)_{m \in \mathbb{N}}$. In particular, we use a diffusion model on coefficients $\bfZ$ with forward process $(\bfZ_t)_{t \geq 0} = \{(Z_{m,t})_{t \geq 0} \}_{m=0}^M$. The diffusion process in spectral space induces an associated \emph{spatial} counterpart by pushing the process through the basis decomposition. We define this induced spatial process $\bfY^M_{\bfx,t} \triangleq \sum_{m=0}^M \lambda_m^{1/2} Z_{m,t} e_m(\bfx)$ where $\bfx \in \calX^n$ for some $n \in \nset$. We now present two results which help interpret and justify our proposed methodology. These results are visually illustrated in \Cref{fig:mnist_trajectories}.

Firstly, we note that the base distribution $\overleftarrow{\bfY}_0^M = \sum_{m=0}^M \lambda_m^{1/2} \overleftarrow{Z}_{m,0} e_m, \; \overleftarrow{Z}_{m,0} \sim \mathrm{N}(0, \Id)$, is a Gaussian process (GP) and ---when using the covariance kernel $\rmK=\rmK_\bfY$--- we can show that $\overleftarrow{\bfY}_0 = \sum_{m=0}^{+\infty} \lambda_m^{1/2} \overleftarrow{Z}_{m,0} e_m$ is the closest GP to the target distribution $\pi$ in the following sense (see \cref{sec:proof-prop-limiting} for a proof of~\cref{prop:proj_gaussian_process}.)

\begin{proposition}[Limiting process]
  \label{prop:proj_gaussian_process}
  Let $\bfY$ be a stochastic process with law $\pi$. Let $(\lambda_m, e_m)_{m\in \nset}$ be the eigenbasis of the Karhunen-Lo\`{e}ve decomposition of $\bfY$ (Appendix~\ref{app:KL_theorem}). Let $\pi^0$ be the distribution of $\overleftarrow{\bfY}_0=\sum_{m=0}^{+\infty} \lambda_m^{1/2} \overleftarrow{Z}_{m, 0} e_m$, $\overleftarrow{Z}_{m,0}\sim \mathrm{N}(0, \Id)$. Denote $\mathrm{GP}(\calX)$ the space of Gaussian processes on $\calX$. Then $\pi^0 = \argmin_{\pi_{\mathrm{GP}} \in \mathrm{GP}(\calX)} \KL{\pi}{\pi_{\mathrm{GP}}}$. In addition,  $\overleftarrow{\bfY}_0^M$ is the projection of 
  $\overleftarrow{\bfY}_0$ on the subspace of $\rmL^2(\calX)$ spanned by $\{e_m\}_{m=0}^M$.
\end{proposition}

Secondly, we prove that the induced forward spatial process corresponds to an Ornstein-Uhlenbeck process with \emph{correlated} noise, i.e. the Brownian motion $\bfB_t$ is replaced by $\rmK(\bfx, \bfx)^{1/2} \bfB_t$ where $\x\in \calX^n$ and $n \in \nset$.

\begin{proposition}[Induced spatial process]
  \label{prop:spatial_process}
  For any $\bfx \in \calX^n$ with $n \in \nset$, $M \in \nset$ and $t \in \ccint{0,T}$:
  \begin{align*}
      \textstyle \bfY_{\x,t}^M = \textstyle \rme^{-t/2} \sum_{m=0}^M \lambda_m^{1/2}  Z_{m,0} e_m(\x)
      + \sum_{m=0}^M \lambda_m^{1/2} \int_0^t \rme^{-s/2} \rmd \bfB_s^m e_m(\bfx).
  \end{align*}
     In addition, let $\eta^M \in \mathcal{P}(\rmc(\coint{0,+\infty}, \rset)^{\mathcal{X}})$ be the distribution of $\bfY^M$. Then $\lim_{M \to + \infty} \eta^M = \eta$, where $\eta \in \mathcal{P}(\rmc(\coint{0,+\infty}, \rset)^{\mathcal{X}})$ is the distribution of $\bfY$ satisfying
    \begin{equation} \label{eq:multivariate_ou}
        \rmd \bfY_{\x,t} = -\tfrac{1}{2}  \bfY_{\x,t}~ \rmd t +  {\rmK}(\x,\x)^{1/2} \rmd \bfB_t  \eqsp .
    \end{equation}
\end{proposition}

See Appendix \ref{sec:spatial_process} for a proof of \Cref{prop:spatial_process}. This result justifies choosing a kernel to encode spatial correlation in the data since the induced spatial SDE correlation is (in the limit of an infinite number of spectral coefficients) given by this kernel matrix. Note that \Cref{prop:spatial_process} is given for the forward process but a similar derivation holds for the backward stochastic process. The time-reversal of \cref{eq:multivariate_ou} is given by $\rmd \overleftarrow{\bfY}_{\x,t} = \{\tfrac{1}{2}  \overleftarrow{\bfY}_{\x,t} + \rmK(\x,\x) \nabla \log p_{\x,T-t}(\overleftarrow{\bfY}_{\x,t}) \}~ \rmd t + {\rmK}(\x,\x)^{1/2} \rmd \bfB_t$, where $p_{\x,t}$ is the density of $\bfY_{\x,t}$.

When directly modelling \eqref{eq:multivariate_ou} (and its time-reversal), it is not clear whether the parameterised distributions satisfy the consistency criterion in the Kolmogorov extension theorem. This constraint is immediately satisfied in our approach. Additionally, at training time, we bypass any numerical instability or computational cost stemming from kernel matrix operations. Indeed, in our approach all costly kernel operations are incurred as a pre-processing step.

\subsection{Computing the Spectral Dataset}
\label{sec:compute_bases}

Our diffusion model is trained on a spectral dataset $\{\bfZ^s\}_{s=1}^S$, where $\bfZ=(Z_0, ..., Z_M) \in \mathbb{R}^{M+1}$ are the coefficients of the representation
\begin{equation}
    \textstyle \bfY_x^M = \sum_{m=0}^M \lambda_m^{\frac{1}{2}} Z_m e_m(x),
\end{equation}
as established in \Cref{sec:mercer}. We now show how the eigensystem $(\lambda_m, e_m)_{m \in \mathbb{N}}$, satisfying  $[\rmT_\rmK e_m](x) = \lambda_m e_m(x)$ for all $x \in \mathcal{X}$, can be obtained for a given continuous, symmetric and positive definite kernel $\rmK(x, x')$, and how the resulting spectral dataset is computed.

We begin by assuming a dataset of stochastic process realisations $\mathcal{D} = \{\{\bfY^s(x_i^s)\}_{i=1}^{n^s}\}_{s=1}^S$ where each process is evaluated on a finite subset of index points $\Xi^s := \{x_i^s\}_{i=1}^{n^s} \subset \mathcal{X}$. In most instances of practical interest, the dataset naturally gives rise to a common evaluation set $\Xi = \bigcap_{s=1}^S\Xi^s$, for instance time series data is often recorded at fixed intervals. Given a non-empty $\Xi$, we use the Nystr\"{o}m method~\citep{Atkinson_1997} to approximate the eigensystem of kernel $\rmK$. This involves forming the Gram matrix of the kernel over the shared evaluation set, $\tilde{\rmK} = (\mathrm{K}_{ij})_{i,j=1}^n \in \mathbb{R}^{n\times n}$ where $\rmK_{ij} = \rmK(x_i, x_j)$ for $x_i, x_j \in \Xi$ and $n = |\Xi|$ denotes the cardinality of the shared evaluation set. After finding the eigenvalues $\lambda_m$ and eigenvectors $u_m$ of the (normalised) Gram matrix $\frac{1}{n}\tilde{\rmK}$, the Nystr\"{o}m method gives an approximation of the continuous eigenfunctions $e_m(x)$ as
\begin{equation}
    \textstyle \hat{e}_m(x) =  (\lambda_m \sqrt{n})^{-1} \sum_{i=1}^n\rmK(x, x_i) u_m^{(i)}
\end{equation}
where $u_m^{(i)}$ denotes the $i^\text{th}$ element of $u_m$. A full derivation is given in Appendix~\ref{app:compute_eigenfunctions}.

In some cases $\Xi$ may be empty or $n = |\Xi|$ too small for a reliable Nystr\"{o}m approximation. The Nystr\"{o}m method relies on an $n$-point quadrature and converges to the true eigensystem only as $n \to \infty$ \citep{baker1979Numerical}. At small $n$, it only resolves the leading low frequency modes accurately. For instance, for a stationary kernel on a uniform grid over $[0,L]$, modes above the Nyquist frequency $\frac{n}{2L}$ cannot be recovered. With small $n$ we therefore may not be able to recover a sufficient fraction of the total spectral energy using our representation. In this case, we can instead choose a known kernel with an analytic eigenbasis \citep[Section 4.3]{rasmussen2006Gaussian}, for example the RBF kernel admits analytic eigenfunctions under a Gaussian input measure. On the other hand, if $\Xi = \Xi^s \; \forall s$ and we are only interested in modelling the function at the evaluation set $\Xi$, then we can decompose the empirical covariance matrix directly and drop the Nystr\"{o}m off-grid extension, which brings the benefits of the Karhunen-Lo\`{e}ve theorem discussed in \Cref{sec:mercer,sec:method_properties}.

Finally we compute the spectral dataset. Given stochastic process dataset $\mathcal{D} = \{\{\bfY^s(x_i^s)\}_{i=1}^{n^s}\}_{s=1}^S$ and approximate eigensystem $(\hat{\lambda}_m, \hat{e}_m)$ as above, we fix a truncation order $M< |\calX|$ such that we capture $99\%$ of the energy of the eigenspectrum and compute a new $M+1$-dimensional dataset $\calD^M = \{\bfZ^s\}_{s=1}^S = \left\{\{Z_m^s\}_{m=0}^{M}\right\}_{s=1}^S$, where $Z_m^s = \lambda_m^{-1/2} \langle \bfY^s, e_m \rangle \approx \frac{1}{n^s} \sum_{i=1}^{n^s} \hat{\lambda}_m^{-1/2} \bfY^s(x_i^s) \hat{e}_m(x_i^s)$. We remark that approximating the eigenbasis and forming the spectral dataset may appear costly, however it is strictly a one-off `pre-processing' cost independent of training and sampling. We provide the cost analysis in Appendix~\ref{app:compute_eigenfunctions}. We also include a complete algorithm for our approach in Appendix~\ref{app:algorithm}.

\section{Related Work}

\label{sec:related_work}

\paragraph{Gaussian processes and the neural processes family.} One standard and powerful framework to construct distributions over functional spaces is Gaussian processes \citep{rasmussen2003gaussian}. However, they are restricted in their modelling capacity, and exact GPs scale badly with the number of datapoints. These problems are partially alleviated by using neural processes~\citep{kim2019Attentive,garnelo2018neural,garnelo2018Conditional,jha2022neural,louizos2019functional,singh2019sequential,markou2022Practical}, although they also assume a Gaussian likelihood and only satisfy predictive consistency---meaning one must condition on a context set to achieve consistency.

\paragraph{Spatial structure in diffusion models.} A variety of approaches have also been proposed to incorporate spatial correlation in the noising process of finite-dimensional diffusion models leveraging the multi-scale structure of data \citep{jing2022subspace,guth2022wavelet,ho2022cascaded,saharia2021image,hoogeboom2022blurring,rissanen2022Generative}. Our methodology can also be seen as a principled way to modify the forward dynamics in classical denoising diffusion models. Indeed by applying the diffusion in the spectral space the destruction process \emph{blurs} the samples instead of \emph{noising} them, see \Cref{fig:mnist_trajectories} for an illustration. Therefore, our contribution can be understood in the light of recent advances in generative modelling on cold and soft denoising diffusion models \citep{daras2022soft,bansal2022cold,hoogeboom2022blurring}. Several recent works explicitly introduced a covariance matrix in the Gaussian noise, either based on a choice of kernel \citep{bilos2022Modeling}, on the Discrete Fourier Transform of images \citep{voleti2022Scorebased}, or via empirical second order statistics for protein modelling \citep{ingrahamIlluminating}.

\paragraph{Infinite dimensional diffusion models.} Infinite dimensional diffusion models have also been studied in \cite{kerrigan2022Diffusion,pidstrigach2023infinite,lim2023score,bond2023infty,hagemann2023multilevel,franzese2023continuous,dutordoir2022neural}. These works extend classical diffusion models \cite{song2020score,ho2020denoising} to the infinite-dimensional space, using techniques from the Cameron-Martin theory to define infinite-dimensional Gaussian measures and infinite dimensional SDEs. These models are equivalent to ours when projected in a spectral space, see Appendix~\ref{sec:spatial_process}. In practice, however, such functional models have to be discretised for training and sampling purposes, and consequently lose consistency over the finite marginals.

\paragraph{Spectral approaches.} One possibility to circumvent the consistency problem is to consider spectral decompositions---such as the Karhunen-Lo\`{e}ve one---which shows that stochastic processes can be represented over a functional basis with random coefficients.  Some prior works have explored this avenue using variational auto-encoders to model these coefficients~\citep{mishra2020pi} whereas \citet{lim2022mathcal} considered energy-based models.  Another related line of work is `inter-domain Gaussian processes'~\citep{gredilla2009Interdomain}, which rely on `inducing' variables that are obtained via inducing functions such as elements of a Fourier basis~\citep{hensman2018Variational}.

\begin{figure*}[t!]
    \centering%
    \raisebox{-0.5\height}{\includegraphics[width=0.76\textwidth]{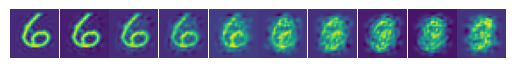}}
    \hfill
    \raisebox{-0.5\height}{\includegraphics[width=0.10\textwidth]{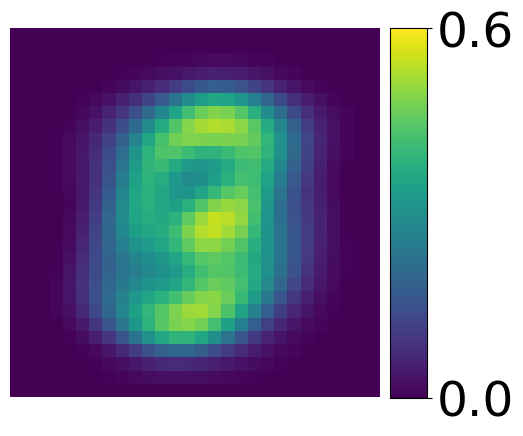}} 
    \hfill
    \raisebox{-0.5\height}{\includegraphics[width=0.10\textwidth]{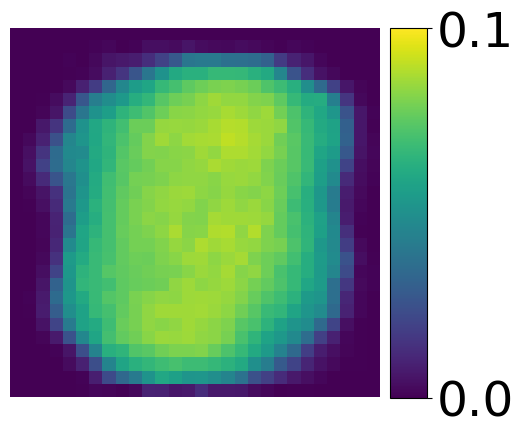}}\\[-0.8em]%
    \raisebox{-0.5\height}{\includegraphics[width=0.76\textwidth]{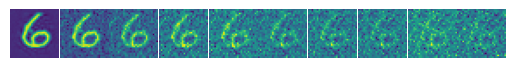}}
    \hfill
    \raisebox{-0.5\height}{\includegraphics[width=0.10\textwidth]{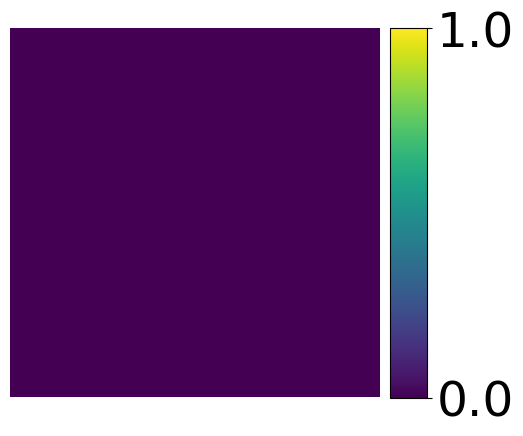}}
    \hfill
    \raisebox{-0.5\height}{\includegraphics[width=0.10\textwidth]{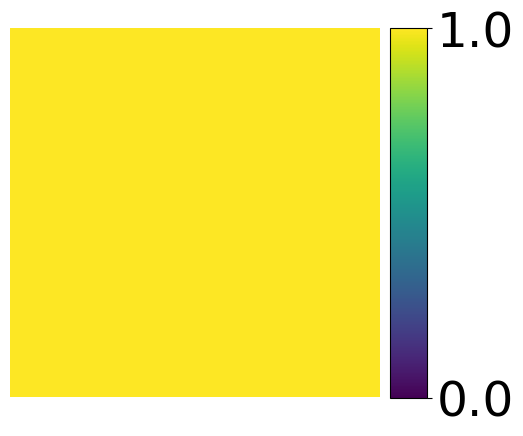}}\\
    \vspace{-0.8em}
    \captionsetup{font=footnotesize}
    \caption{Induced \emph{spatial} forward process $\bfY_{\bfx, t}^M$ of our model using the covariance kernel [Top] vs standard OU process [Bottom] on an MNIST sample. Pixel-wise mean and standard deviation of the reference measure are reported in rightmost columns respectively. Our induced diffusion process follows a correlated OU process in the spatial domain, converging to the Gaussian process closest in KL divergence to the data distribution.}
    \label{fig:mnist_trajectories}
\vspace{-1.5em}
\end{figure*}

\section{Experimental Results}
\label{sec:experiments}

\begin{wraptable}[15]{r}{0.3\textwidth}
    \vspace{-1.3em}
    \setlength{\tabcolsep}{0pt}
    \begin{tabular}{c} 
         \includegraphics[width=1.\linewidth]{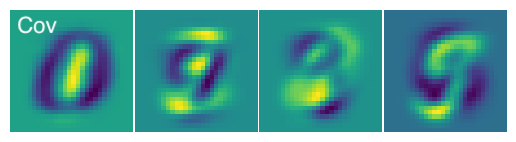}\\ [-1em]
         \includegraphics[width=1.\linewidth]{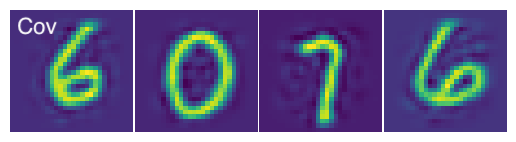}\vspace{-.5em}\\
        \includegraphics[width=1.\linewidth]{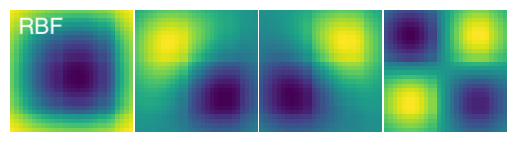}\\ [-1em]
        \includegraphics[width=1.\linewidth]{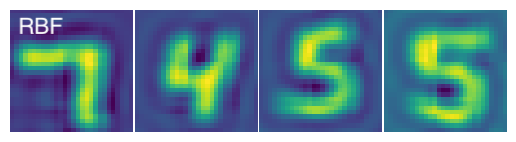}
    \end{tabular}
    \vspace{-.2em}
    \captionof{figure}{Eigenfunctions [Top] and samples from our method [Bottom] for covariance (Cov) and RBF kernels.}
    \label{fig:kernel_examples}
\end{wraptable}

We first illustrate our methodology by synthesising images from the MNIST dataset. While this is not a true functional modelling task, it allows for an informative visual illustration of the properties of our method. Next, we model real datasets which exhibit challenging non-stationarity and bi-modality. Finally, we assess our method's ability to model predictive distributions by conditioning on a set of observations. We compare against Gaussian processes (GPs), latent neural processes (NPs) and variants, and neural diffusion processes (NDPs) \citep{dutordoir2022neural}, a recent diffusion-based alternative. Experimental details can be found in Appendix~\ref{sec:experimental-details}.

\paragraph{Modelling 2D images.}

We use the MNIST image dataset to illustrate the key components of our proposed method (see \Cref{sec:mercer,sec:model}). First, we investigate the influence of the choice of the kernel. In \Cref{fig:kernel_examples}, we present numerical estimates of the four eigenfunctions with the highest spectral energy, alongside the associated samples from the model. Qualitatively, eigenfunctions of the covariance kernel capture spatial correlations effectively and thus generate the best samples, while the Radial Basis Function (RBF) kernel imposes a smoothness which tends to produce blurry samples.

\begin{figure}[h]
    \centering
    \begin{subfigure}[b]{0.45\textwidth}
    \includegraphics[width=\linewidth]{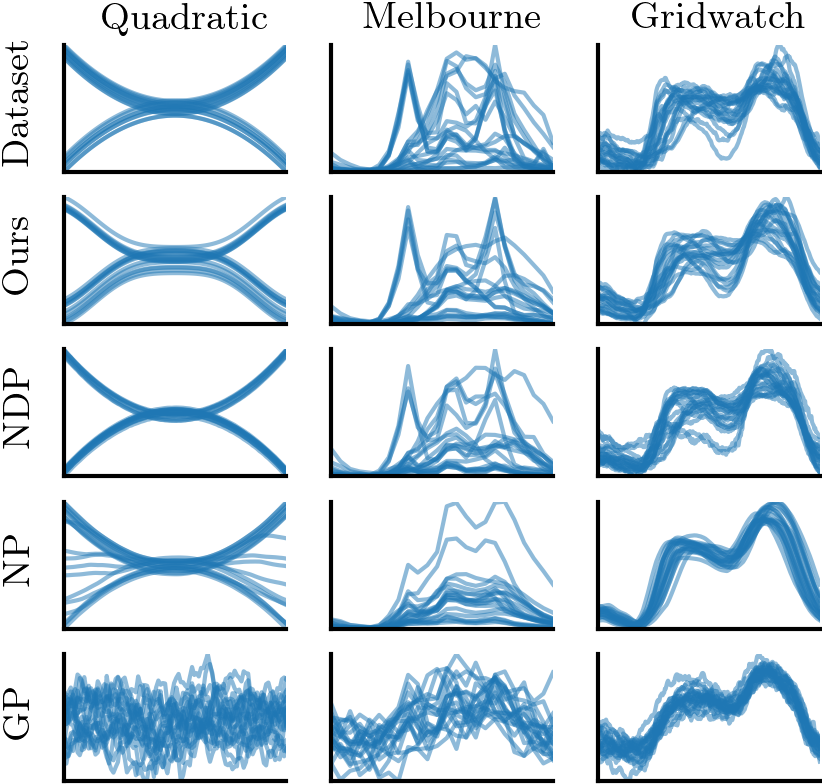}
    \caption{
    Prior samples.
    }
    \label{fig:unconditional}
    \end{subfigure}
     \hfill
    \begin{subfigure}[b]{0.45\textwidth}
    \includegraphics[width=0.9\linewidth]{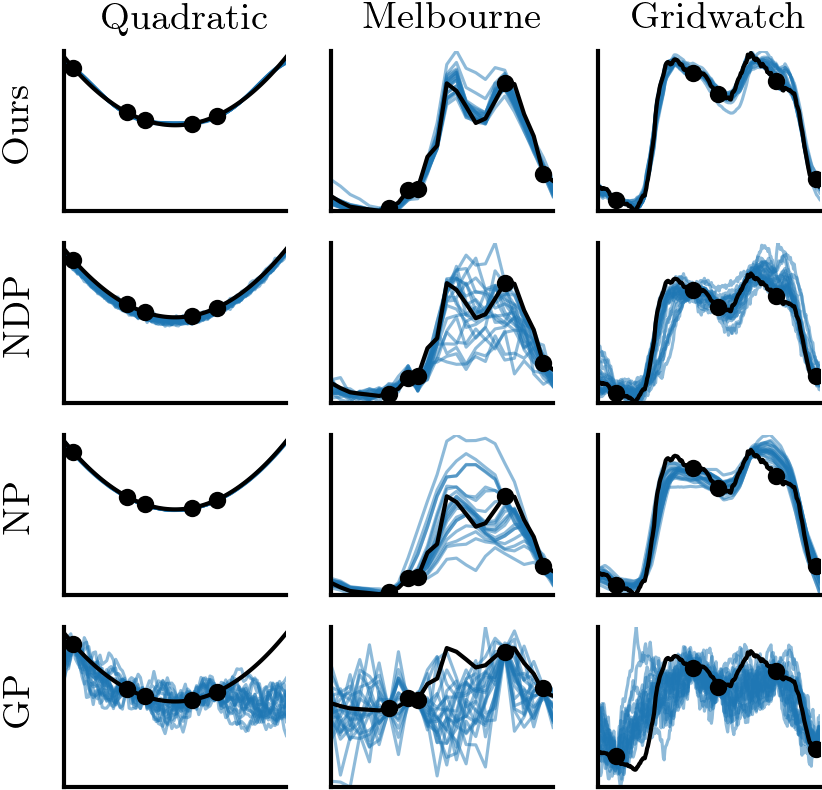}
    \caption{
    Posterior samples in blue, true samples in black, context points in black.
    }
    \label{fig:1d_conditional}
    \end{subfigure}
    \caption{
    Samples from our model alongside trained NDPs, NPs and GPs on the different 1D datasets.
    }
\end{figure}

\paragraph{Modelling 1D stochastic processes.}
\label{sec:modelling_1d_sps}

Next, we assess the probabilistic modelling capacity of our proposed methodology, and in particular on processes with multi-modal marginal densities. To do so we train our model, GPs, NPs and NDPs \citep{dutordoir2022neural} on several 1D datasets. We consider the \textsc{Melbourne} dataset, which records the number of pedestrians every hour on $10$ different streets of Melbourne, and the \textsc{Gridwatch} dataset where each sample corresponds to the energy demand on the UK National Grid. Additionally, we generate a synthetic bimodal \textsc{Quadratic} dataset as \(f(x) := a x^2 + \epsilon\) with \(a \sim \mathrm{U}(\{-1,1\})\), \(\epsilon \sim \mathrm{N}(0,10)\) and $x \sim \mathrm{U}([-10,10])$. \Cref{fig:unconditional} shows samples from these datasets, along with samples generated by trained NDPs, NPs, GPs and our model. We clearly observe that our method and NDPs are capable of expressing and fitting bi-modality on \textsc{Quadratic}, whilst both NPs and GPs assume a Gaussian likelihood, which prevents them from correctly fitting the bimodal marginals. Additionally, we see that NPs underestimate the variance on \textsc{Melbourne} and \textsc{Gridwatch}. To provide a quantitative assessment of these observations, we compute the power of a two-sample hypothesis test for functional data \citep{wynne2022kernel} which aims at discriminating between samples from the model and the dataset. \Cref{tab:quant_results} shows that our method consistently outperforms GPs, NPs and NDPs, confirming the qualitative observations.

\looseness=-1 We also evaluate the predictive performance of our method on these 1D datasets. We use our model and the baselines to predict the entire time series based on a context set of observations uniformly sampled across the input domain. We consider context sets of between $5$ and $50$ observations ($5$ and $12$ on \textsc{Melbourne}) and report the MSE between the predictions and the true sample in \Cref{tab:conditional_1d}. We see that our method outperforms GPs on all datasets and NDPs and NPs on both real datasets. We also visualise predictions on $5$ context points in \Cref{fig:1d_conditional}. Here it is clear that our model is able to capture the context points and correctly interpolate between contexts. NDPs produce the next most convincing samples, although do not appear to match the context points quite as closely as our method. NPs do not appear to accurately fit the context points and GPs fail to capture the length-scale of the data.

\begin{minipage}{\textwidth}
\begin{minipage}[t]{\textwidth}
    \centering
    \small
    \setlength{\tabcolsep}{4pt}
    \begin{tabular}{c | c c c c}
        \toprule
         & Ours & NDP & NP & GP \\
        \midrule
        Quadratic & \(\bf{{5.4}_{\pm 1.6}}\) & \(\bf{{5.5}_{\pm 0.6}}\) & \(\bf{{7.0}_{\pm 1.4}}
\) & \({100.0}_{\pm 0.0}\) \\
        Melbourne & \(\bf{{5.3}_{\pm 0.7}}\) & \({6.4}_{\pm 0.2}\) & \({10.1}_{\pm 1.9}\) & \({20.1}_{\pm 4.0}\) \\
        Gridwatch & \(\bf{{4.7}_{\pm 0.5}}\) & \({6.9}_{\pm 0.8}\) & \({51.8}_{\pm 15.1}\) & \({29.2}_{\pm 5.5}\) \\
        \bottomrule
    \end{tabular}
    \vspace{1em}
    \captionof{table}{Power ($\%$) of a kernel two-sample hypothesis test on 1D datasets. Lower is better. Statistically significant best result is in \textbf{bold}.}
     \label{tab:quant_results}
    \end{minipage}

    \begin{minipage}[t]{\textwidth}
     \centering
     \small
     \setlength{\tabcolsep}{4pt}
        \begin{tabular}{c|cccc}
        \toprule
         & Ours & NDP & NP & GP \\
        \midrule
        Quadratic & $\bf{0.033_{\pm 0.003}}$ & \(\bf{{0.039}_{\pm 0.01}}\) & $\bf{0.030_{\pm0.02}}$ & $1.6_{\pm2.0}$ \\
        Melbourne & $\bf{0.23_{\pm0.01}}$ & \({1.3}_{\pm 0.8}\) & $2.0_{\pm0.1}$ & $11_{\pm2}$ \\
        Gridwatch & $\bf{0.034_{\pm0.02}}$ & \({0.17}_{\pm 0.1}\) & $0.70_{\pm0.01}$ & $9.6_{\pm0.6}$\\
        \bottomrule
    \end{tabular}
    \vspace{1em}
    \captionof{table}{Predictive MSE ($10^{-1}$) on 1D datasets. Number of conditioning points is averaged uniformly over $[5,50]$ ($[5,12]$ for Melbourne).}
    \label{tab:conditional_1d}
    \end{minipage}
\end{minipage}

\paragraph{1D Gaussian process regression.}
\label{sec:1d_gp_regression}
 
While the prior experiments on real datasets revealed the ability of our method to condition on context points and achieve competitive predictive MSE, we were not able to accurately assess the quality of the learned conditional distribution beyond its mean. In this section we fit models to Gaussian process posteriors where the covariance of the conditional process is analytically available for evaluation of the learned distributions.

The number of context observations $|C|$ is uniformly sampled between 1 and 50, while the locations $\{x^i\}_{i \in C}$ are uniformly sampled over $[-2, 2]$. The evaluation locations are similarly sampled between $[-2, 2]$, although their number is kept fixed to $100$. We additionally train NDPs along with three variants of neural processes (NPs), latent NPs \citep{garnelo2018neural}, attentive NPs (ANPs) \citep{kim2019Attentive} which use a cross-attention module to produce a query-specific context embedding, and attentive Gaussian NPs (AGNPs) \citep{markou2022Practical} which additionally parameterise the full likelihood covariance. In \Cref{tab:predictive_error} we report the predictive mean and covariance errors for each of the benchmarks, alongside our model. We include variants of our method which use the analytic RBF and Mat\'{e}rn kernels and also the empirically estimated covariance kernel (Cov). Additionally in \Cref{fig:conditional_gp_dataset} we show samples from the learnt predictive distributions, alongside the target GP mean and credible region. We observe that our approach yields the best predictive modelling performance both quantitatively and qualitatively. We also note that our method performs better than the NP variants across all choices of kernel in the functional decomposition. The AGNP is the most competitive NP variant on predictive covariance, but struggles on the predictive mean. A clear advantage of our spectral representation \eqref{eq:parameterization} is that the choice of functional basis naturally induces covariance structure onto the modelled stochastic process, even when the spectral coefficients are uncorrelated. In contrast, NPs assume a Gaussian likelihood with diagonal covariance $\mathrm{K}_{ij} = \sigma(x_i, r)^2 \updelta_{ij}$ with $r$ being the stochastic latent embedding of the context set, whilst AGNPs parameterise a full covariance matrix as $\mathrm{K}_{ij} = g(x_i, r)^\top g(x_j, r)$ with $g$ a neural network. 
\begin{wrapfigure}[27]{r}{.4\textwidth}
    \centering
    \includegraphics[width=\linewidth]{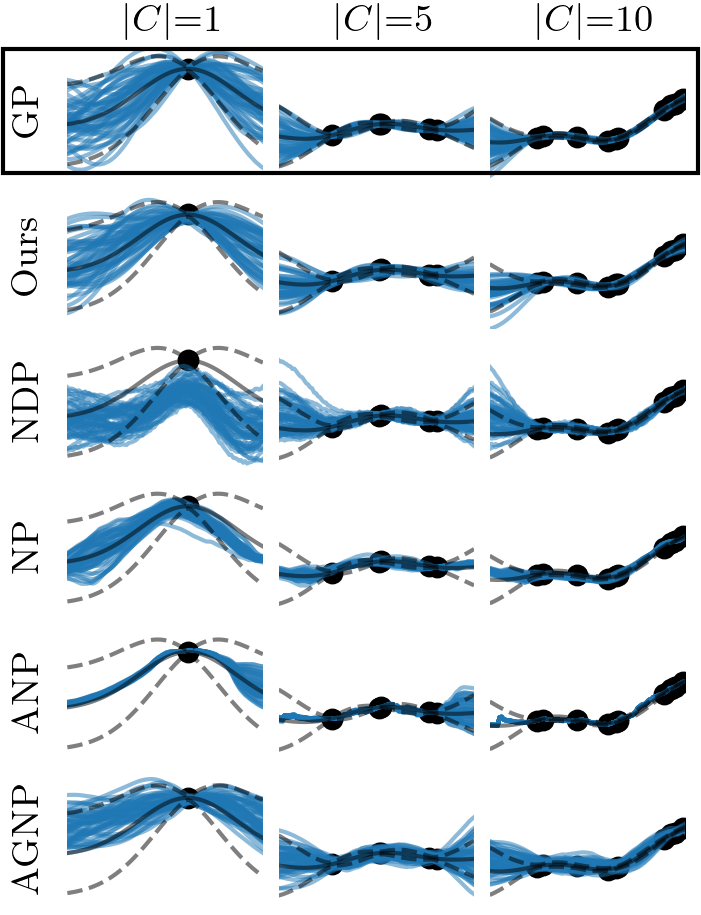}
    \vspace{-.8em}
    \caption{Posterior samples (paths) are shown in thin black for each model, while the true GP posterior mean and standard deviation are shown respectively in thick solid and dashed black. The black dots are context points being conditioned on, and their number is increasing from left to right.
    }
    \label{fig:conditional_gp_dataset}
 \end{wrapfigure}
 As such, these methods are required to learn the covariance structure entirely from the data. Indeed, in \Cref{fig:covariance}, we see that AGNPs have the most expressive covariance structure amongst the NP variants, but our approach most accurately matches the true predictive covariance.

\begin{figure}[t]
\centering
\begin{minipage}[t]{\textwidth}
\begin{minipage}[t]{0.5\textwidth}
\vspace{.0em}
        \centering
        \footnotesize
        \setlength{\tabcolsep}{4pt}
            \begin{tabular}{c|cccc}
                \toprule
                 & \multicolumn{2}{c}{RBF} & \multicolumn{2}{c}{Mat\'ern} \\ 
                 Models & Mean & Cov & Mean & Cov \\
                 \midrule
                Ours (RBF) & \({13}_{\pm 0}\) & \(\bf{{0.3}_{\pm 0.2}}\) & \({40}_{\pm 2}\) & \(\bf{{0.5}_{\pm 0.2}}\) \\
                Ours (Mat\'ern) & \(\bf{{4}_{\pm 0}}\) & \(\bf{{0.2}_{\pm 0.1}}\) & \({15}_{\pm 6}\) & \(\bf{{0.4}_{\pm 0.2}}\) \\
                Ours (Cov) & \({13}_{\pm 1}\) & \(\bf{{0.4}_{\pm 0.2}}\) & \(\bf{{7}_{\pm 1}}\) & \(\bf{{0.5}_{\pm 0.4}}\) \\
                NDP & \({33}_{\pm 4}\) & \({2.9}_{\pm 1.8}\) & \({35}_{\pm 4}\) & \({3.2}_{\pm 1.9}\) \\
                NP & \({102}_{\pm 8}\) & \({4.5}_{\pm 1.2}\) & \({63}_{\pm 7}\) & \({4.3}_{\pm 1}\) \\
                ANP & \({76}_{\pm 14}\) & \({4.4}_{\pm 1.5}\) & \({41}_{\pm 19}\) & \({6.8}_{\pm 1.4}\) \\
                AGNP & \({107}_{\pm 17}\) & \({1.7}_{\pm 0.1}\) & \({44}_{\pm 2}\) & \({2}_{\pm 0.4}\) \\
                \bottomrule
            \end{tabular}
        \captionof{table}{
        Error \((10^{-3})\) over predictive mean and covariance on GP datasets, resp.\ measured with $\ell_2$ and Frobenius distances.
        Statistically significant best results are in \textbf{bold}.
        }
        \label{tab:predictive_error}
    \end{minipage}
    \hfill
    \begin{minipage}[t]{0.47\textwidth}
    \vspace{-.3em}
        \centering
        \includegraphics[width=\linewidth]{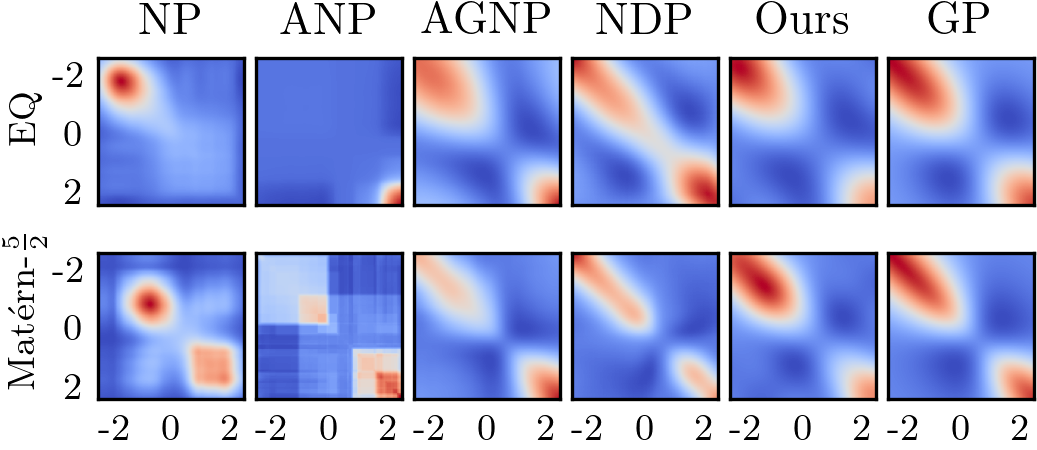}
        \captionof{figure}{
        Predictive covariance matrices for each method and true covariances, for Gaussian process with RBF [Top] or Mat\'ern [Bottom] kernels, conditioning on 1 observation.
        }
        \label{fig:covariance}
    \end{minipage}
\end{minipage}
\vspace{-1em}
\end{figure}

\section{Discussion}

In this work we introduced a diffusion-based model over stochastic processes. Our approach constructs a finite representation of an uncountable process by decomposing in the spectral space, where we truncate the representation and build a finite-dimensional diffusion model over the stochastic spectral coefficients. This choice defines a model which directly outputs valid stochastic processes, thereby satisfying consistency and exchangeability requirements for free while enjoying the full flexibility of diffusion generative models. We empirically demonstrated that our method shows great capacity for modelling complex functional distributions. Simple amortisation of the score over context sets also yields state-of-the-art predictive posteriors.

In this work, we relied on amortisation of the context observations for conditional modelling, but we believe that particle filtering approaches \citep{trippe2022Diffusion} would be promising in this setting. Furthermore, we leave the exploration of other functional bases, such as wavelets for computer vision applications, to future work.

\newpage
\section*{Acknowledgements}

AP and MH are funded through the StatML CDT through grant EP/S023151/1. AD acknowledges support of the UK Defence Science and Technology Laboratory (Dstl) and Engineering and Physical Research Council (EPSRC) under grant EP/R013616/1. This is part of the collaboration between US DOD, UK MOD and UK EPSRC under the Multidisciplinary University Research Initiative. AD is also partially supported by the EPSRC grant EP/R034710/1 CoSines.
EM is supported by an EPSRC Prosperity Partnership EP/T005386/1 between Microsoft Research and the University of Cambridge.

\bibliography{bibliography.bib}

\newpage
\onecolumn
\appendix
\appendixhead

\section{Methodological Details of Spectral Diffusion Models}
\label{app:methodology}

In this section, we provide additional methodological details beyond those presented in \Cref{sec:method}.

\subsection{Extension to $\mathbb{R}^d$-Valued Processes}
\label{app:rd_processes}

Our presentation in \Cref{sec:method} was specialised to 1-dimensional stochastic processes for simplicity. We therefore begin by extending the presentation to general $\rset^d$-valued stochastic processes. In this case we have $\bfY_x \in \rset^d$, the kernel is matrix-valued $\rmK: \calX \times \calX \to \rset^{d \times d}$ and eigenfunctions are vector-valued $e_m: \calX \to \mathbb{R}^d$. We present the remainder of this section using the $\rset^d$ convention unless stated otherwise, to illustrate how the methodological details extend to this setting. 

We also note that further consideration of the choice of kernel $\rmK: \calX \times \calX \to \rset^{d \times d}$ is required in the $\rset^d$ case. For instance, one could choose a diagonal structure which models each output dimension independently. This choice affects the GP represented by the base distribution (see \Cref{prop:proj_gaussian_process}), but does not prevent the model from capturing correlations over output dimensions: the diffusion model learns a full joint distribution over the coefficients $\bfZ\in\rset^{d(M+1)}$ which still induces correlations over output dimensions in the reconstruction $\bfY^M$. Choosing such a diagonal structure reduces the computational cost of computing the eigenbasis, see Appendix~\ref{app:compute_eigenfunctions}.

\subsection{Karhunen-Lo\`{e}ve Theorem}
\label{app:KL_theorem}

Here we present the Karhunen-Lo\`{e}ve Theorem and derive the independence of the spectral coefficients when $\rmK = \rmK_\bfY$. 

\begin{theorem}[Karhunen--Lo\`{e}ve~\citep{loeve1978probability}]
    \label{thm:KL_theorem}
    Let $\calX$ be a compact space and $(\bfY_x)_{x \in \calX}$ a continuous stochastic process with $\bfY_x \in \rset^d$ such that for any $x \in \calX$, $\expeLigne{\bfY_x} = 0$ and $\expeLigne{\normLigne{\bfY_x}^2} < + \infty$. For any $x_1, x_2 \in \calX$, we denote the covariance kernel as $\rmK_\bfY(x_1,x_2) = \expeLigne{\bfY_{x_1} \bfY_{x_2}^T}$ and define an $\mathrm{L}^2(\calX, \rset^d)$ operator $\rmT_{\rmK_\bfY}$ given for any $f \in \mathrm{L}^2(\calX, \rset^d)$ by
    \begin{equation} 
        \label{eq:app_operator_kernel_definition}
        \textstyle \rmT_{\rmK_\bfY} [f](\cdot) = \int_\calX \rmK_\bfY(x,\cdot) f(x) \rmd x.
    \end{equation}
    Denoting by $(e_m)_{m \in \nset} \in (\mathrm{L}^2(\calX, \rset^d))^\nset$ and $(\lambda_m)_{m \in \nset} \in \rset^\nset$ the eigenfunctions and eigenvalues of $\rmT_{\rmK_\bfY}$, and for any $m \in \nset$ setting $\textstyle{Z_m = \lambda_m^{-1/2} \int_\calX  \bfY_x^T e_m(x)  \rmd x}$, we have
    \begin{equation}
        \textstyle \lim\limits_{M \to +\infty} \mathbb{E}[\sup_{x \in \calX} \|\bfY_x - \sum_{m=0}^M {\lambda_m}^{1/2} Z_m e_m(x)\|^2] = 0.
    \end{equation}
\end{theorem}

We now show that the random coefficients from the Karhunen-Lo\`{e}ve basis expansion, given for any $m \in \nset$ by
\begin{equation}
  \textstyle{
    Z_m = \langle \bfY, \lambda_m^{-1/2} e_m\rangle_{\rmL^2(\mathcal{X})} = \lambda_m^{-1/2} \int_\mathcal{X} \bfY(x)^T e_m(x) \rmd x \eqsp ,
    }
\end{equation}
satisfy \(\mathbb{E}[Z_m] = 0\) and \(\mathbb{E}[Z_{m}Z_{m'}] = \updelta_{m, m'}\).  First, we have that for any $m \in \nset$,
\begin{equation}
  \textstyle \mathbb{E}[Z_m] = \lambda_m^{-1/2} \int_\mathcal{X} \mathbb{E}[\bfY(x)]^T  e_m(x)\rmd x = 0.
\end{equation}
Then, we have for any $m, m' \in \nset$:
\begin{align}
    \mathbb{E}[Z_mZ_{m'}] &= \textstyle{ \mathbb{E}\left[\lambda_m^{-1/2}\lambda_{m'}^{-1/2} \int_\mathcal{X}  \bfY(x)^T e_m(x) \rmd x \int_\mathcal{X} \bfY(x')^T e_{m'}(x')\rmd x'\right]} \\
    &=  \mathbb{E}\left[\lambda_m^{-1/2}\lambda_{m'}^{-1/2} \textstyle{ \int_\mathcal{X}\int_{\mathcal{X}} (\bfY_x^Te_m(x)) (\bfY_{x'}^Te_{m'}(x')) \rmd x \rmd x'}\right] \\
    &=  \lambda_m^{-1/2}\lambda_{m'}^{-1/2} \textstyle{ \int_\mathcal{X}\int_{\mathcal{X}} e_m(x)^T\mathbb{E}\left[\bfY_x\bfY_{x'}^T\right]e_{m'}(x') \rmd x \rmd x'}\\
    &=  \lambda_m^{-1/2}\lambda_{m'}^{-1/2} \textstyle{ \int_\mathcal{X}\int_{\mathcal{X}} e_m(x)^T \rmK_\bfY(x, x')e_{m'}(x') \rmd x \rmd x'}\\
    &= \lambda_m^{-1/2} \lambda_{m'}^{-1/2} \textstyle{ \int_\mathcal{X} \lambda_m e_m(x')^T e_{m'}(x')  \rmd x'}= \lambda_m^{-1/2} \lambda_{m'}^{-1/2} \lambda_m \updelta_{m, m'}= \updelta_{m, m'} \eqsp . 
\end{align}

\subsection{Consistency and Exchangeability}
\label{app:consistency_exchangability_proof}

As discussed in \Cref{sec:mercer}, our representation \Cref{eq:parameterization} is by construction a stochastic process. Our model therefore satisfies consistency and exchangeability by invoking the trivial direction of the Kolmogorov extension theorem (\Cref{thm:ket}). For illustrative purposes, we also provide a direct proof of this result below. The proof hinges on the conditional independence of evaluations of $\bfY^M$ given a realisation of the stochastic coefficients $\bfZ$.

\begin{proposition}
  \label{prop:exchangeability_consistency_app}
 We denote by $\pi^M$ the distribution of $\bfY^M = (\sum_{m=0}^M {\lambda_m}^{1/2} Z_m e_m(x))_{x \in \calX}$ and $\rmS = \ensembleLigne{\pi^M_{x_1, \dots, x_n}}{(x_1, \dots, x_n) \in \calX^n,
  \ n \in \nset}$.
  Then we have that $\rmS$ is \emph{exchangeable} and \emph{consistent}.
\end{proposition}

\begin{proof}{}{}
 We denote $\bfZ = \{Z_m\}_{m=0}^M$ and $\pi$ the model
  distribution on $\bfZ$.  For any $n \in \nset$ and
  $f \in \rmc_b((\rset^d)^n, \rset)$ we have
\begin{align}
\textstyle{\int f(y_1, \dots, y_n) \rmd \pi^M_{x_1, \dots, x_n}(y_1, \dots, y_n) }
&=\textstyle{ \int \int_{\rset^{M+1}} f(y_1, \dots, y_n) \rmd \pi^M_{x_1, \dots, x_n}(y_1, \dots, y_n, \bfZ=z) \rmd z} \\ 
&=\textstyle{ \int \int_{\rset^{M+1}} f(y_1, \dots, y_n)\rmd \pi^M_{x_1, \dots, x_n}(y_1, \dots, y_n, | \bfZ=z) \rmd \pi (z)  } \\
&=\textstyle{ \int \int_{\rset^{M+1}} f(y_1, \dots, y_n) \prod_{i=1}^n \rmd \pi^M_{x_i}(y_i|\bfZ=z) \rmd \pi (z) \eqsp .}
\end{align}
Hence, for any $n \in \nset$, $n$-permutation $\sigma \in \mathfrak{S}_n$,
$x_1, \dots, x_n \in \calX$, and continuous and bounded function
$f \in \rmc_b((\rset^d)^n)$ we have
$$\textstyle{ \int f(y_{\sigma^{-1}(1)}, \dots, y_{\sigma^{-1}(n)}) ~\rmd \pi^M_{x_{\sigma(1)}, \dots, x_{\sigma(n)}}(y_1, \dots, y_n) 
= \int f(y_{1}, \dots, y_{n}) ~\rmd \pi^M_{x_1, \dots, x_n}(y_1, \dots, y_n) \eqsp .}$$
Similarly, for any $n_1 \leq n_2$,
$x_1, \dots, x_{n_2} \in \calX$ and $f \in \rmc_b((\rset^d)^{n_1})$ we have
\begin{align}
&\textstyle{\int f(y_{1}, \dots, y_{n_1})~\rmd \pi^M_{x_1, \dots, x_{n_2}}(y_1, \dots, y_{n_2})}
  \\
  & \qquad = \textstyle{ \int f(y_{1}, \dots, y_{n_1}) \int_{\rset^{M+1}} ~\rmd \pi^M_{x_1, \dots, x_{n_2}}(y_1, \dots, y_{n_2}|\bfZ=z)  \rmd \pi(z) } \\
& \qquad = \textstyle{ \int f(y_{1}, \dots, y_{n_1}) \int_{\rset^{M+1}} ~\prod_{i=1}^{n_2} \rmd \pi^M_{x_i}(y_i|\bfZ=z)  \rmd \pi(z) } \\
& \qquad = \textstyle{ \int f(y_{1}, \dots, y_{n_1}) \int_{\rset^{M+1}} ~\prod_{i=1}^{n_1} \rmd \pi^M_{x_i}(y_i|\bfZ=z) \left( \prod_{i=n_1+1}^{n_2} \rmd \pi^M_{x_i}(y_i|\bfZ=z) \right) \rmd \pi(z) } \\
& \qquad = \textstyle{ \int f(y_{1}, \dots, y_{n_1}) \int_{\rset^{M+1}} ~\prod_{i=1}^{n_1} \rmd \pi^M_{x_i}(y_i|\bfZ=z)  \rmd \pi(z)} \\
& \qquad = \textstyle{ \int f(y_{1}, \dots, y_{n_1})~\rmd \pi^M_{x_1, \dots, x_{n_1}}(y_1, \dots, y_{n_1}) \eqsp .}
\end{align}
In this setting, the random variable $\bfZ$ is finite dimensional as it is supported in $\rset^{M+1}$, but more generally this result is still true with an (infinite dimensional) process $F$.
\end{proof}

\subsection{Nystr\"{o}m Approximation of Eigenfunctions and the Spectral Dataset}
\label{app:compute_eigenfunctions}

Here we expand on the approximation of the eigenbasis and computation of the spectral dataset outlined in~\Cref{sec:compute_bases}. We begin by recalling eigensystem $(\lambda_m, e_m)_{m \in \mathbb{N}}, \lambda_m \in \rset, e_m \in \rmL^2(\calX, \rset^d)$ satisfying $[\rmT_\rmK e_m](x) = \lambda_m e_m(x)$ for all $x \in \mathcal{X}$, with $\rmT_\rmK$ the $\rmL^2(\calX, \rset^d)$ operator
$$
    \textstyle [\rmT_\rmK f](\cdot) = \int_\calX \rmK(\cdot, x) f(x) \rmd x.
$$
We recall also the stochastic process dataset $\mathcal{D} = \{\{\bfY^s(x_i^s)\}_{i=1}^{n^s}\}_{s=1}^S$ where each process is evaluated on a finite subset of index points $\Xi^s := \{x_i^s\}_{i=1}^{n^s} \subset \mathcal{X}$, and we form the shared evaluation set $\Xi = \bigcap_{s=1}^S\Xi^s$ which we assume here to be non-empty and of sufficient size to resolve the required modes of the basis. We denote the size of the evaluation set by $n = |\Xi|$ and label the points $\Xi = \{x_1, ..., x_n\}$. We note that operator $\rmT_\rmK$ is implicitly defined using the Lebesgue measure on $\calX$, but in the following it will be approximated by the empirical measure of our observation set $\Xi$, therefore the basis we obtain depends on this empirical distribution.

\begin{remark}
    We now present the following calculations for $\rset$-valued processes, noting that the $\rset^d$ case can be dealt with by stacking evaluations at the $n$ points of $\Xi$ into $\rset^{nd}$, thereby giving eigenvectors in $\rset^{nd}$ and forming a kernel matrix $\tilde{\rmK} \in \rset^{nd \times nd}$. Depending on the structure of the kernel over $d$ dimensional processes, the full $nd \times nd$ matrix may have particular structure which can ease computational costs. 
\end{remark}

We approximate the eigensystem condition using quadrature:
\begin{equation}
    \label{eqnapp_:eigensys}
\textstyle{    \lambda_m e_m(x) = \mathrm{T}_{\mathrm{K}} e_m(x) =\int_\calX \rmK(x, x')e_m(x') \rmd x' \approx \frac{1}{n}\sum_{i=1}^n \mathrm{K}(x, x_i)e_m(x_i).}
\end{equation}
Evaluating the above at the entire set of evaluation points
$\Xi$ results in the matrix eigenproblem
\[\textstyle{ \frac{1}{n} \tilde{\rmK} u_m = \hat{\lambda}_m u_m \eqsp,} \]
where \(\tilde{\rmK}\) is an $n \times n$ matrix with entries
\(\tilde{\rmK}_{i,j} = \mathrm{K}(x_i, x_j)\), \(\hat{\lambda}_m \in \rset\) is the
matrix eigenvalue and \(u_m \in \rset^n\) is the corresponding matrix
eigenvector, the set of which are orthonormal in $\rset^n$. By Baker's
theorem \citep[][Theorem 3.4]{baker1979Numerical}, \(\hat{\lambda}_m \to \lambda_m\) as
\(n \to \infty\).  

Now we use the Nystr\"{o}m method to extend the matrix eigenvectors to eigenfunctions $e_m \in \rmL^2(\calX, \rset^d)$. Firstly, we check the normality of the extended eigenvectors in $\rmL^2(\calX, \rset^d)$. If we take $e_m(x_i) = u_m^{(i)}$ i.e. the $i^\text{th}$ element of the vector $u_m$, then
$$
 \textstyle{\langle e_m, e_{m'}\rangle_{\rmL^2(\calX)} = \int_\calX e_m(x)e_{m'}(x) \rmd x \approx \frac{1}{n}\sum_{i=1}^n e_m(x_i)e_{m'}(x_i) = \frac{1}{n}u_m^Tu_{m'} = \frac{1}{n}\updelta_{mm'}}.
$$
Therefore, in order to satisfy orthonormality in $\rmL^2(\calX)$, we rescale the eigenvectors by $\sqrt{n}$, i.e. $e_m(x_i) = \sqrt{n}u_m^{(i)}$. Now we apply the Nystr\"{o}m formula to evaluate $e_m(x)$ for $x \in \calX$ outside the evaluation set $\Xi$. This gives
\begin{equation}
    \textstyle{ \hat{e}_m(x) \approx (n\hat{\lambda}_m)^{-1} \sum_{i=1}^{n} \rmK(x, x_i)e_m(x_i) = (\sqrt{n}\hat{\lambda}_m)^{-1} \sum_{i=1}^{n} \rmK(x, x_i)u_m^{(i)} \eqsp .}
\end{equation}

As $n \to \infty$, this method recovers the entire countable eigenbasis $(\lambda_m, e_m)_{m\in\mathbb{N}}$, however in practice we are limited to the first $n$ modes, of which the trailing modes may be poorly estimated. We therefore truncate the basis as follows. We define the truncation order $M$ as
$$\textstyle{ M = \min\{m :  \sum_{j=0}^m \hat{\lambda}_j /\sum_{j=0}^{n-1} \hat{\lambda}_j \geq \eta\}}
$$
where $\eta \in [0, 1]$ is a threshold parameter, fixed to $\eta = 0.99$ for our experimental results on the 1D datasets. We note that eigenvalues are arranged in descending order. Alternatively, one can treat the truncation order $M$ as a hyperparameter.

Having obtained an approximate eigensystem with truncation order $M$ as above, it is simple to obtain the spectral dataset $\calD^M = \{\bfZ^s\}_{s=1}^S = \left\{\{Z_m^s\}_{m=0}^{M}\right\}_{s=1}^S$ using
\begin{equation}Z_m^s = \lambda_m^{-1/2} \langle \bfY^s, e_m \rangle \approx \frac{1}{n^s} \sum_{i=1}^{n^s} \hat{\lambda}_m^{-1/2} \bfY^s(x_i^s) \hat{e}_m(x_i^s).
\end{equation}
Note that here we perform quadrature over the full evaluation set $\Xi^s$ for each sample $\bfY^s$ in the dataset, while the quadrature used in the Nystr\"{o}m formula was over the shared evaluation set $\Xi$.

\paragraph{Computational cost.} We now give a full picture of the computational costs in the full $\rset^d$ scenario, specifically for (a) solving the matrix eigenproblem, (b) projecting onto the spectral dataset, (c) reconstruction to the spatial domain. We note that (a) and (b) are one-off pre-processing costs while (c) should be counted as part of the sampling cost of our method.

Firstly (a), solving the matrix eigenproblem for a dense matrix $\tilde{\rmK} \in \rset^{nd \times nd}$ costs $\mathcal{O}((nd)^3)$. This is our main computational cost which, while only required as a pre-processing step, can be prohibitively expensive as $n$ and/or $d$ increase. We note that different kernel structures over $d$ dimensional processes can reduce this cost, for instance applying an independent kernel to each output dimension results in a block diagonal kernel matrix with eigenproblem cost of $\mathcal{O}(dn^3)$.  

Secondly (b), each coefficient $Z_m$ is an inner product over the $nd$-stacked vectors, computed for $M$ modes and $S$ dataset samples, giving cost $\mathcal{O}(S\;M\;nd)$.

Finally (c), reconstruction to the spatial domain depends on whether the query location $x_q$ is in the shared evaluation set $\Xi$. For $x_q \in \Xi$, $\bfY_{x_q}^M$ is a simple look-up costing $\mathcal{O}(Md)$. For $x_q \notin \Xi$, we apply the Nystr\"{o}m extension meaning each eigenfunction evaluation is an $nd$ inner product, giving the total cost $\mathcal{O}(Mnd^2)$.

\subsection{Eigenvalue-Weighted Score Matching Objective}
\label{app:eig_weighted_dsm}

We also propose the following weighted DSM loss to train the score network $\bm{s}_\theta: [0,T] \times \rset^{M+1} \rightarrow \R^{M+1}$:

\begin{equation} 
\label{eq:dsm_spectral}
\textstyle{\mathbb{E}_t \{ \mathbb{E}_{\bfZ_0, \bfZ_t} [ \| \tilde{\lambda}^\alpha \odot ( \bm{s}_\theta(\bfZ_t,t) - \nabla_{\bfZ_t} \log p(\bfZ_t|\bfZ_0) )\|^2_2  ] \} \eqsp ,}
\end{equation}
with $\tilde{\lambda}= [\lambda_0/\Lambda, \dots, \lambda_M/\Lambda]^\top$, \(\Lambda = \sum_{m=0}^{M} \lambda_m\), the vector of normalised eigenvalues, $\alpha>0$ a tuneable hyperparameter, $t \sim U([0, T])$, $\bfZ_0 = \{ \lambda_m^{-1/2} \int_\calX  \bfY_x^T e_m(x) \rmd x \}_{m=0}^{M}$ and
\begin{equation}
  \rmd \bfZ_t = -(1/2) \bfZ_t \rmd t +  \rmd \bfB_t \eqsp , 
\end{equation}
(we omit the parameter $\beta_t$). The exponentiation in \eqref{eq:dsm_spectral} is applied in a pointwise fashion and $\odot$ denotes element-wise multiplication. The motivation behind \eqref{eq:dsm_spectral} is to put more importance on fitting the lower frequency (i.e.\ higher eigenvalue) components of the decomposition $\sum_{m=0}^M {\lambda_m}^{1/2} Z_m e_m(x)$ since these matter the most in terms of quality of reconstruction, as per \Cref{thm:KL_theorem}. With $\alpha = 0$, $\lambda^\alpha = \bm{1}$, therefore we recover the standard DSM loss.

\subsection{Likelihood Evaluations}
\label{app:likelihoods}

Define the Mercer (equivalently Karhunen--Lo\`{e}ve with covariance kernel) \emph{recomposition} map $\phi: \R^{M+1} \to V \subset \mathrm{L}^2(\calX, \R^d)$, where $V = \mathrm{span}\{e_m\}_{m=0}^M$ is the $(M+1)$-dimensional subspace spanned by the leading eigenfunctions,
\begin{equation}
  \textstyle
  \phi(\bfZ) = \sum_{m=0}^{M} Z_m \sqrt{\lambda_m}\, e_m(\cdot),
  \qquad \bfZ = (Z_0, \dots, Z_M) \in \R^{M+1} .
\end{equation}
As $\phi$ is affine and the $e_m$ are linearly independent, $\phi$ is a bijection from $\R^{M+1}$ onto the subspace $V$. Its Jacobian is the linear map $J_z\phi : \R^{M+1} \to \mathrm{L}^2(\calX, \R^d)$ with columns $\tfrac{\partial \phi}{\partial Z_m} = \sqrt{\lambda_m}\, e_m$. Using the orthonormality of the eigenfunctions in $\mathrm{L}^2(\calX, \R^d)$, the associated Gram matrix is diagonal,
\begin{equation}
  \textstyle
  J_z\phi^\top J_z\phi
  = \big( \sqrt{\lambda_m}\sqrt{\lambda_n}\,
          \langle e_m, e_n \rangle_{\mathrm{L}^2(\calX,\R^d)} \big)_{m,n=0}^M
  = \big( \sqrt{\lambda_m}\sqrt{\lambda_n}\, \delta_{mn} \big)_{m,n=0}^M
  = \mathrm{diag}(\lambda_0, \dots, \lambda_M) .
\end{equation}

\begin{remark}
The image $F = \phi(\bfZ)$ is a \emph{function}, and the function space $\mathrm{L}^2(\calX, \R^d)$ admits no Lebesgue reference measure. Therefore there is no density of $F$ on the full function space. However, the image of $\phi$ is confined to the \emph{finite-dimensional} subspace $V$, on which the $(M+1)$-dimensional volume (Hausdorff) measure is a well-defined reference. All densities below are understood with respect to this measure on $V$.
\end{remark}

Because $\phi$ maps $\R^{M+1}$ into a higher-dimensional space, the relevant volume scaling is the rectangular change-of-variables factor $|J_z\phi^\top J_z\phi|^{1/2}$, rather than a square Jacobian determinant. The density of $F = \phi(\bfZ)$ on $V$ is therefore
\begin{equation}
  \textstyle
  \log p(F)
  = \log p(\bfZ) - \tfrac{1}{2}\log\big| J_z\phi^\top J_z\phi \big|
  = \log p(\bfZ) - \tfrac{1}{2} \sum_{m=0}^M \log \lambda_m ,
\end{equation}
where $\log p(\bfZ)$ is the joint log-density of the coefficients under the model. The coefficients $\bfZ$ are themselves modelled by a finite-dimensional diffusion model, which admits a likelihood via the probability flow ODE~\citep{song2020score}.

\subsection{Spectral Diffusion Model Algorithm}
\label{app:algorithm}

\begin{algorithm}[H]
\caption{\small Spectral Diffusion Model}
\label{alg:rsgm-c}
\begin{algorithmic}[1]
  \small
  \Require $T,\ \mathcal{D},\ \theta_0,\ N_{\mathrm{iter}},\ \vareps,\ \rmK,\ \eta$
  \State{{/// TRAINING ///}}
  \State Obtain $\mathcal{D}^M,\ \{(\hat\lambda_m, \hat e_m)\}_{m=0}^M$ from $\mathcal{D}$ via \Cref{alg:dataset_projection} \Comment{Spectral dataset projection}
  \For{$k \in \{0, \dots, N_{\mathrm{iter}}-1\}$}
    \State Sample mini-batch $\{\bfZ_0^s\}_{s \in B}$ from $\mathcal{D}^M$
    \State $t \sim \mathcal{U}(\ccint{\vareps, T})$
    \State $\bfZ_t^s = \rme^{-\frac12\int_0^t \beta_r \rmd r}\,\bfZ_0^s + \big(1 - \rme^{-\int_0^t \beta_r \rmd r}\big)^{1/2} G,\quad G \sim \mathrm{N}(0,\Id)$ \Comment{Forward diffusion}
    \State Compute denoising score-matching loss $\ell(\theta_k)$ \Comment{\Cref{eq:dsm_standard}}
    \State $\theta_{k+1} = \texttt{optimiser\_update}(\theta_k,\ \nabla_\theta \ell(\theta_k))$ \Comment{e.g.\ Adam}
  \EndFor
  \State $\theta^\star = \theta_{N_{\mathrm{iter}}}$
  \State{{/// SAMPLING ///}}
  \State $\overleftarrow{\bfZ}_0 \sim \mathrm{N}(0,\Id)$ \Comment{Initialise from the reference distribution}
  \State $b_{\theta^\star}(\bf{z}, t) = \beta_{T-t}\big(\tfrac12 \bf{z} + \bm{s}_{\theta^\star}(\bf{z},\, T-t)\big)$ for $t \in \ccint{0,T},\ \bf{z} \in \rset^{M+1}$ \Comment{Reverse-process drift}
  \State Simulate $\overleftarrow{\bfZ}_T$ from $\overleftarrow{\bfZ}_0$ by Euler--Maruyama ($N$ steps) with drift $b_{\theta^\star}$ and diffusion $\sqrt{\beta_{T-t}}$ \Comment{Approximate reverse diffusion}
  \State \textbf{return} $\theta^\star,\quad x \mapsto \sum_{m=0}^M \hat\lambda_m^{1/2}\,\overleftarrow{Z}_{m,T}\,\hat e_m(x)$ \Comment{Generative model in function space}
\end{algorithmic}
\end{algorithm}

\begin{algorithm}[H]
\caption{\small Spectral dataset projection}
\label{alg:dataset_projection}
\begin{algorithmic}[1]
  \small
  \Require $\mathcal{D} = \{\{\bfY^s(x_i^s)\}_{i=1}^{n^s}\}_{s=1}^S,\ \rmK,\ \eta$
  \State Form the shared evaluation set $\Xi = \bigcap_{s=1}^S \Xi^s = \{x_1, \dots, x_n\}$, with $n = |\Xi|$
  \State Gram matrix $\tilde{\rmK}_{ij} = \rmK(x_i, x_j)$ for $x_i, x_j \in \Xi$
  \State Solve eigenproblem $\tfrac{1}{n}\tilde{\rmK}\, u_m = \hat\lambda_m\, u_m$, with $\hat\lambda_0 \ge \hat\lambda_1 \ge \cdots$
  \State $\hat e_m(x) = (\sqrt{n}\,\hat\lambda_m)^{-1} \sum_{i=1}^n \rmK(x, x_i)\, u_m^{(i)}$ \Comment{Nystr\"om extension \citep[Theorem~3.4]{baker1979Numerical}}
  \State $M = \min\big\{m : \textstyle\sum_{j=0}^m \hat\lambda_j \,/\, \sum_{j=0}^{n-1} \hat\lambda_j \ge \eta\big\}$ \Comment{Truncation order}
  \State $Z_m^s = \tfrac{1}{n^s} \sum_{i=1}^{n^s} \hat\lambda_m^{-1/2}\, \bfY^s(x_i^s)\, \hat e_m(x_i^s)$ \quad for $s \in \{1,\dots,S\},\ m \in \{0,\dots,M\}$ \Comment{Project onto coefficients}
  \State \textbf{return} $\mathcal{D}^M = \{\bfZ^s\}_{s=1}^S = \{\{Z_m^s\}_{m=0}^M\}_{s=1}^S,\quad \{(\hat\lambda_m, \hat e_m)\}_{m=0}^M$
\end{algorithmic}
\end{algorithm}

\section{Preliminaries on Random Fields}
\label{sec:prel-rand-fields}

We will use two well-known results from the probability literature
which we recall for completeness, see \cite[Theorem
15.26]{charalambos2013infinite}.  Let
$\ensembleLigne{\mathcal{Y}_x}{x \in \mathcal{X}}$ be a family of spaces. For
any set $\mathsf{A} \subset \mathcal{X}$ and subset
$\mathsf{B} \subset \mathsf{A}$, we define
$\pi_{\mathsf{B}}^{\mathsf{A}}: \ \prod_{x \in \mathsf{A}} \mathcal{Y}^x \to
\prod_{x \in \mathsf{B}} \mathcal{Y}^x$ the projection operator and for any
$f \in \prod_{x \in \mathsf{A}} \mathcal{Y}^x$,
$\pi_{\mathsf{B}}^{\mathsf{A}}(f)$ is the restriction of $f$ to $\mathsf{B}$.

\begin{proposition}
  Let
  $\ensembleLigne{\mathcal{Y}_x, \mathcal{F}_x^{\mathcal{Y}}}{x \in
    \mathcal{X}}$ be a family of measurable spaces and for each finite subset
  $\mathsf{A} \subset \mathcal{X}$, let $\mu_{\mathsf{A}}$ be a probability
  measure on $\prod_{x \in \mathsf{A}} \mathcal{Y}_x$. Assume that
  $\ensembleLigne{\mu_{\mathsf{A}}}{\mathsf{A} \in 2^\mathcal{X}}$ is Kolmogorov
  consistent, i.e. for any $\mathsf{A} \subset \mathcal{X}$ and
  $\mathsf{B} \subset \mathsf{A}$ we have that
  $\mu_{\mathsf{B}} = (\pi_{\mathsf{B}}^{\mathsf{A}})_{\#}
  \mu_{\mathsf{A}}$. Then, there exists a unique probability measure $\mu$ on
  the product space $\prod_{x \in \mathcal{X}} \mathcal{Y}_x$ such that for any finite subset
  $\mathsf{A} \in \mathcal{X}$,
  $\mu_{\mathsf{A}} = (\pi_{\mathsf{A}}^{\mathcal{X}})_{\#} \mu$.
\end{proposition}

Note that in the case where $\mathcal{Y}_x = \rset^d$ and for any
$\mathsf{A} \subset \mathcal{X}$ finite, if we have that $\mu_{\mathsf{A}}$ is a
Gaussian probability measure with mean $m_{\mathsf{A}}$ and covariance matrix
$\Sigma_{\mathsf{A}}$, then the previous proposition can be simplified.

\begin{proposition}
  For each finite subset $\mathsf{A} \subset \mathcal{X}$, let
  $\mu_{\mathsf{A}}$ be a Gaussian probability measure on
  $\prod_{x \in \mathsf{A}} \rset^d$ with mean $m_{\mathsf{A}}$ and covariance
  matrix $\Sigma_{\mathsf{A}}$. Assume that there exist
  $m: \ \mathcal{X} \to \rset^d$ and
  $\Sigma: \ \mathcal{X} \times \mathcal{X} \to \rset^{d \times d}$ such that
  for any finite subset $\mathsf{A} \subset \mathcal{X}$ we have
  $m_{\mathsf{A}}$ is the restriction of $m$ to $\mathsf{A}$ and
  $\Sigma_{\mathsf{A}}$ is the restriction of $\Sigma$ to $\mathsf{A}$.  Then,
  there exists a unique probability measure $\mu$ on the product space
  $\prod_{x \in \mathcal{X}} \rset^d$ such that for any finite subset
  $\mathsf{A} \in \mathcal{X}$,
  $\mu_{\mathsf{A}} = (\pi_{\mathsf{A}}^{\mathcal{X}})_{\#} \mu$.
\end{proposition}

\section{Gaussian Process and Reference Measure}
\label{sec:proof-prop-limiting}

In this section, we provide two proofs of \Cref{prop:proj_gaussian_process}.

\subsection{Partition Definition}
\label{sec:partition-definition}

We recall \Cref{prop:proj_gaussian_process}.
\begin{proposition}{}{}
  \label{prop:proj_gaussian_process_appendix1}
  Let $\pi^0$ be the distribution of $\sum_{m=0}^{+\infty} \lambda_m^{1/2} Z_{m, 0} e_m$
  and $\pi$ the target distribution. Denote $\mathrm{GP}(\calX)$ the space of
  Gaussian processes on $\calX$ and assume that $\rmK$ is the covariance
  kernel. Then,
  $\pi^0 \in \argmin_{\pi_{\mathrm{GP}} \in \mathrm{GP}(\calX)}
  \KL{\pi}{\pi_{\mathrm{GP}}}$. In addition, $\sum_{m=0}^{M} \lambda_m^{1/2} Z_{m, 0} e_m$
  is the projection of $\sum_{m=0}^{+\infty} \lambda_m^{1/2} Z_{m, 0} e_m$ on the subspace
  of $\rmL^2(\calX)$ spanned by $\{e_m\}_{m=0}^M$.
\end{proposition}

Note that in this first approach we do not assume any regularity on the samples
of the process. This proof is based on \citet[Theorem
1]{sun2019functional} which considered a modified definition of the
Kullback-Leibler divergence. Namely, for any $\mu, \nu$ probability measures
over a probability space $(\Omega, \calF)$ we define
\begin{equation}
  \textstyle{\KL{\mu}{\nu} = \sup \ensembleLigne{\KL{\mu_P}{\nu_P}}{P \ \text{ measurable finite partition of $\Omega$}}} \eqsp ,
\end{equation}
where for any measurable finite partition $P = \{\Omega_1, \dots, \Omega_N\}$ we
define $\mu_P = \{\mu(\Omega_1), \dots, \mu(\Omega_N)\}$, similarly for $\nu_P$.
With that definition \citet[Theorem 1]{sun2019functional} holds. Namely, for
any $\mu, \nu \in \Pens((\rset^d)^\calX)$\footnote{The product space
  $(\rset^d)^\calX$ is a measurable space with the cylindrical sigma algebra,
  see \cite{sun2019functional} for more details in that context.}, we have
\begin{equation}
  \label{eq:supremum_kl_process}
  \textstyle{ \KL{\mu}{\nu} = \sup\ensembleLigne{\KL{\mu_{x_1, \dots, x_n}}{\nu_{x_1, \dots, x_n}}}{(x_1, \dots, x_n) \in \calX^n, \ n \in \nset} \eqsp . }
\end{equation}
Let $\pi$ be the target distribution of the stochastic process. We define
$\mathrm{GP}(\calX)$ the space of Gaussian processes over the input space
$\calX$. We have that
\begin{align}
  &\inf\ensembleLigne{\sup\ensembleLigne{\KL{\pi_{x_1, \dots, x_n}}{\pi^0_{x_1, \dots, x_n}}}{(x_1, \dots, x_n) \in \calX^n, \ n \in \nset}}{\pi^0 \in \mathrm{GP}(\calX)} \\
  & \qquad \geq \sup\ensembleLigne{\inf\ensembleLigne{\KL{\pi_{x_1, \dots, x_n}}{\pi^0_{x_1, \dots, x_n}}}{\pi^0 \in \mathrm{GP}(\calX)}}{(x_1, \dots, x_n) \in \calX^n, \ n \in \nset} \eqsp . 
\end{align}
In addition, we have
\begin{align}
  &\sup\ensembleLigne{\inf\ensembleLigne{\KL{\pi_{x_1, \dots, x_n}}{\pi^0_{x_1, \dots, x_n}}}{\pi^0 \in \mathrm{GP}(\calX)}}{(x_1, \dots, x_n) \in \calX^n, \ n \in \nset} \\
  &\qquad  = \sup\ensembleLigne{\inf\ensembleLigne{\KL{\pi_{x_1, \dots, x_n}}{\pi^0_{x_1, \dots, x_n}}}{\pi^0_{x_1, \dots, x_n} \in \mathrm{GP}(\{x_1, \dots, x_n\})}}{ \\ & \qquad \qquad (x_1, \dots, x_n) \in \calX^n, \ n \in \nset} \eqsp \notag ,
\end{align}
where we emphasise that the set $\mathrm{GP}(\{x_1, \dots, x_n\})$ is simply the
set of $nd$-dimensional Gaussian probability measures. Hence, for any
$\{x_1, \dots, x_n\} \in \calX^n$ and $n \in \nset$, we have that
\begin{equation}
  \KL{\pi_{x_1, \dots, x_n}}{\pi^{0,\star}_{x_1, \dots, x_n}} = \inf\ensembleLigne{\KL{\pi_{x_1, \dots, x_n}}{\pi^0_{x_1, \dots, x_n}}}{\pi^0_{x_1, \dots, x_n} \in \mathrm{GP}(\{x_1, \dots, x_n\})} \eqsp , 
\end{equation}
where $\pi^{0,\star}_{x_1, \dots, x_n}$ is the Gaussian measure with same mean
and covariance matrix as $\pi_{x_1, \dots, x_n}$. Using the Kolmogorov extension
theorem \cite[Theorem 15.26]{charalambos2013infinite}, there exists
$\pi^\star \in \Pens((\rset^d)^\calX)$ such that for any
$\{x_1, \dots, x_n\} \in \calX^n$ and $n \in \nset$,
$\pi^{\star}_{x_1, \dots, x_n} = \pi^{0,\star}_{x_1, \dots, x_n}$. Therefore, we have that
\begin{equation}
  \KL{\pi_{x_1, \dots, x_n}}{\pi^{\star}_{x_1, \dots, x_n}} = \inf\ensembleLigne{\KL{\pi_{x_1, \dots, x_n}}{\pi^0_{x_1, \dots, x_n}}}{\pi_{x_1, \dots, x_n} \in \mathrm{GP}(\calX)} \eqsp .
\end{equation}
Hence, using \cref{eq:supremum_kl_process} we get that
\begin{align}
  & \sup\ensembleLigne{\inf\ensembleLigne{\KL{\pi_{x_1, \dots, x_n}}{\pi^0_{x_1, \dots, x_n}}}{\pi^0 \in \mathrm{GP}(\calX)}}{(x_1, \dots, x_n) \in \calX^n, \ n \in \nset} \\
  & \qquad = \sup\ensembleLigne{\KL{\pi_{x_1, \dots, x_n}}{\pi^{\star}_{x_1, \dots, x_n}}}{\{x_1, \dots, x_n\} \in \calX^n}  = \KL{\pi}{\pi^\star} \eqsp . 
\end{align}
Therefore, we have that
\begin{equation}
  \inf\ensembleLigne{\sup\ensembleLigne{\KL{\pi_{x_1, \dots, x_n}}{\pi^0_{x_1, \dots, x_n}}}{(x_1, \dots, x_n) \in \calX^n, \ n \in \nset}}{\pi^0 \in \mathrm{GP}(\calX)} \geq \KL{\pi}{\pi^\star} \eqsp ,
\end{equation}
which implies using \cref{eq:supremum_kl_process} that
\begin{equation}  
  \KL{\pi}{\pi^\star} \leq \inf\ensembleLigne{\KL{\pi}{\pi^0}}{\pi^0 \in \mathrm{GP}(\calX)} \eqsp . 
\end{equation}
The equality holds since $\pi^\star \in \mathrm{GP}(\calX)$. Finally, since
$\pi^\star$ and $\pi$ have the same covariance kernels, they have the same
Karhunen-Lo\`{e}ve eigensystems. Therefore, there exists $\{Z_m\}_{m \in \nset}$
i.i.d.\ Gaussian random variables with zero mean and identity covariance matrix
such that $\bfY^\star = \sum_{m \in \nset} \lambda_m^{1/2} Z_m e_m$ has
distribution $\pi^\star$, which concludes the proof.

\subsection{Sample Continuous}
\label{sec:sample-continuous}

In our second approach, we restrict ourselves to the case of sample continuous
processes. Namely, we no longer consider $\mu \in \Pens((\rset^d)^\calX)$ but
$\mu \in \Pens(\rmc(\calX, \rset^d))$.
\begin{proposition}{}{}
  \label{prop:proj_gaussian_process_appendix2}
  Assume that $\pi \in \rmc(\calX, \rset^d)$ and that there exists
  $\phi: \ \coint{0, +\infty}$ such that for any $x_1, x_2 \in \calX$,
\begin{equation}
  \textstyle{\expeLigne{\normLigne{\bfY_{x_1} - \bfY_{x_2}}^2} \leq \phi(\normLigne{x_1 - x_2}) \eqsp ,}
\end{equation}
such that $\int_0^{+\infty} \phi(\exp[-t^2]) \rmd t < +\infty$.  Let $\pi^0$ be
the distribution of $\sum_{m=0}^{+\infty} \lambda_m^{1/2} Z_{m, 0}
e_m$. Denote $\mathrm{GP}(\calX)$ the space of Gaussian processes on $\calX$ and
assume that $\rmK$ is the covariance kernel. Then,
$\pi^0 \in \argmin_{\pi_{\mathrm{GP}} \in \mathrm{GP}(\calX)}
\KL{\pi}{\pi_{\mathrm{GP}}}$. In addition,
$\sum_{m=0}^{M} \lambda_m^{1/2} Z_{m, 0} e_m$ is the projection of
$\sum_{m=0}^{+\infty} \lambda_m^{1/2} Z_{m, 0} e_m$ on the subspace of
$\rmL^2(\calX)$ spanned by $\{e_m\}_{m=0}^M$.

\end{proposition}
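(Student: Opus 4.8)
The plan is to reduce the sample-continuous case to the partition argument already carried out for \Cref{prop:proj_gaussian_process_appendix1}, the only genuinely new ingredient being that the matching-moment Gaussian process $\pi^\star$ must be shown to concentrate on $\rmc(\calX,\rset^d)$ rather than merely on the product space $(\rset^d)^\calX$. First I would observe that, because $\pi$ is sample continuous, the Borel $\sigma$-algebra on $\rmc(\calX,\rset^d)$ generated by the uniform norm coincides with the cylindrical $\sigma$-algebra generated by the finite-dimensional evaluation maps $\bfY \mapsto (\bfY_{x_1},\dots,\bfY_{x_n})$; hence a continuous process is determined by its finite-dimensional marginals, and the genuine Kullback--Leibler divergence on path space again admits the supremum representation \cref{eq:supremum_kl_process} over those marginals. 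This lets me transport the finite-dimensional moment-matching computation verbatim: for each tuple $(x_1,\dots,x_n)$ the unique Gaussian minimiser of $\KL{\pi_{x_1,\dots,x_n}}{\cdot}$ is the $nd$-dimensional Gaussian $\pi^{0,\star}_{x_1,\dots,x_n}$ sharing the mean and covariance of $\pi_{x_1,\dots,x_n}$, and the same minimax inequality collapses to an equality.

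The analytic heart of the argument, and the step I expect to be the main obstacle, is showing that these consistent finite-dimensional Gaussians glue (via Kolmogorov extension) to a process $\pi^\star$ whose sample paths are continuous, so that $\pi^\star \in \Pens(\rmc(\calX,\rset^d))$ and the supremum representation is legitimately applicable to it. This is exactly where the hypothesis $\int_0^{+\infty}\phi(\exp[-t^2])\rmd t<+\infty$ is used: since $\pi^\star$ has the same covariance kernel $\rmK$ as $\pi$, the centred Gaussian process with that covariance satisfies the identical $\rmL^2$-increment bound $\expeLigne{\normLigne{\bfY_{x_1}-\bfY_{x_2}}^2}\le \phi(\normLigne{x_1-x_2})$, and for Gaussian increments this controls the canonical (Dudley) metric $d(x_1,x_2)=\expeLigne{\normLigne{\bfY_{x_1}-\bfY_{x_2}}^2}^{1/2}$. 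The stated integral is the Dudley entropy integral for this metric after the Gaussian change of scale $u=\exp[-t^2]$, so Dudley's theorem (or a Kolmogorov--Chentsov estimate) yields a continuous modification of $\pi^\star$. Care is needed to verify that the metric entropy of $\calX$ under $d$ is summable against $\phi$ in precisely the written form, and that compactness of $\calX$ makes the covering numbers finite.

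With $\pi^\star \in \mathrm{GP}(\calX)$ established on the correct space, the remaining steps are short. Since $\pi^\star$ is the KL-minimiser over $\mathrm{GP}(\calX)$, it suffices to identify it with $\pi^0$. As $\pi^\star$ is a centred Gaussian process with covariance kernel $\rmK$, its Karhunen--Lo\`eve expansion from \Cref{sec:karh-loeve-theor} reads $\bfY^\star=\sum_{m=0}^{+\infty}\lambda_m^{1/2}Z_m e_m$ with coefficients $Z_m=\lambda_m^{-1/2}\int_\calX \langle \bfY^\star_x, e_m(x)\rangle \rmd x$; these are jointly Gaussian, and the computation of \Cref{sec:kl_coeff} gives $\expeLigne{Z_m Z_{m'}}=\updelta_{m,m'}$, so being uncorrelated and jointly Gaussian they are i.i.d.\ $\mathrm{N}(0,1)$. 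This is precisely the law of the $\bar Z_{m,0}$, whence $\pi^\star=\pi^0$ and $\pi^0$ minimises $\KL{\pi}{\cdot}$ over $\mathrm{GP}(\calX)$.

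Finally, the truncation claim is immediate from orthonormality of $\{e_m\}$: the partial sum $\sum_{m=0}^M \lambda_m^{1/2}\bar Z_{m,0}e_m$ is the truncation of the $\rmL^2(\calX)$-orthogonal expansion of $\sum_{m=0}^{+\infty}\lambda_m^{1/2}\bar Z_{m,0}e_m$, hence coincides with its orthogonal projection onto $\mathrm{span}\{e_m\}_{m=0}^M$, which concludes the proof.
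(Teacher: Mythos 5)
Your proposal is correct, and its overall architecture matches the paper's proof in \Cref{sec:sample-continuous}: both reduce the statement to the finite-dimensional moment-matching and minimax argument of \Cref{sec:partition-definition} by first establishing the supremum representation \cref{eq:supremum_kl_process_duo} of the KL divergence over finite-dimensional marginals for measures on $\rmc(\calX,\rset^d)$, and both then use the Fernique-type condition on $\phi$ to ensure the moment-matched Gaussian process $\pi^\star$ has a sample-continuous modification (the paper cites \cite{fernique1975regularite}; your Dudley-entropy phrasing is the same device), before identifying $\pi^\star$ with $\pi^0$ through the Karhunen--Lo\`{e}ve coefficients and reading off the projection claim from orthonormality. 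The genuine difference is how you obtain the key lemma \cref{eq:supremum_kl_process_duo}. The paper proves it constructively: the lower bound is data processing, and for the upper bound it covers the compact $\calX$ by balls, builds finite-dimensional approximations $\bfY^n_x=\sum_k \varphi_k(x)\bfY_{x_k}$ with a partition of unity, shows $\sup_{x}\normLigne{\bfY^n_x-\bfY_x}\to 0$ by uniform continuity, identifies $\KL{\mu_n}{\nu_n}$ with a marginal KL via a bijection, and concludes by lower semicontinuity of the KL divergence. You instead argue abstractly: the Borel $\sigma$-algebra of $\rmc(\calX,\rset^d)$ coincides with the cylinder $\sigma$-algebra (a property of the space, valid since compact metric $\calX$ is separable --- sample continuity is what places $\pi$, and later $\pi^\star$, on this space, not what drives the coincidence), and then the KL over the generated $\sigma$-algebra equals the supremum of KLs over the directed family of finite-dimensional sub-$\sigma$-algebras. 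This is a legitimate and shorter route, but note that your ``hence'' hides the real content: the $\sigma$-algebra identification alone does not yield \cref{eq:supremum_kl_process_duo}; one still needs the theorem that $\KL{\mu}{\nu}$ is the supremum of its restrictions along an increasing generating family (Gelfand--Yaglom--Perez, or martingale convergence of the densities $\rmd\mu|_{\calF_n}/\rmd\nu|_{\calF_n}$ along finite evaluation sets at a countable dense subset of $\calX$). That theorem is standard, so your route works once it is cited or proved, and it buys brevity; the paper's partition-of-unity argument is longer but self-contained, supplying exactly that content by hand.
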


In that case, we show that
\begin{equation}
  \textstyle{ \KL{\mu}{\nu} = \sup\ensembleLigne{\KL{\mu_{x_1, \dots, x_n}}{\nu_{x_1, \dots, x_n}}}{(x_1, \dots, x_n) \in \calX^n, \ n \in \nset} \eqsp . }
\end{equation}

More precisely, we have the following proposition.

\begin{proposition}{}{}
  Let  $\mu, \nu \in \Pens(\rmc(\calX, \rset^d))$
\begin{equation}
  \label{eq:supremum_kl_process_duo}
  \textstyle{ \KL{\mu}{\nu} = \sup\ensembleLigne{\KL{\mu_{x_1, \dots, x_n}}{\nu_{x_1, \dots, x_n}}}{(x_1, \dots, x_n) \in \calX^n, \ n \in \nset} \eqsp . }
\end{equation}  
\end{proposition}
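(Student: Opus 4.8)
The plan is to reduce the statement to two classical facts about relative entropy — the data-processing inequality and the monotone convergence of KL along an increasing filtration — together with one topological observation specific to the path space $\rmc(\calX, \rset^d)$.

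First I would dispose of the inequality $\geq$. For any fixed tuple $(x_1, \dots, x_n) \in \calX^n$ the evaluation map $\mathrm{ev}_{x_1, \dots, x_n} : \rmc(\calX, \rset^d) \to (\rset^d)^n$, $\omega \mapsto (\omega(x_1), \dots, \omega(x_n))$, is continuous, hence Borel measurable, and pushes $\mu, \nu$ forward to the marginals $\mu_{x_1, \dots, x_n}, \nu_{x_1, \dots, x_n}$. The data-processing inequality for KL under a measurable map then gives $\KL{\mu_{x_1, \dots, x_n}}{\nu_{x_1, \dots, x_n}} \leq \KL{\mu}{\nu}$, and taking the supremum over all tuples yields this direction.

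The heart of the proof is the reverse inequality, which rests on the observation that the finite-dimensional (cylindrical) $\sigma$-algebra generated by the evaluations coincides with the Borel $\sigma$-algebra of $\rmc(\calX, \rset^d)$. I would fix a countable dense set $D = \ensembleLigne{x_k}{k \in \nset} \subseteq \calX$ (available since $\calX$ is compact metric) and set $\mathcal{G}_n = \sigma(\mathrm{ev}_{x_1}, \dots, \mathrm{ev}_{x_n})$ and $\mathcal{G}_\infty = \sigma(\cup_n \mathcal{G}_n)$. The key claim is $\mathcal{G}_\infty = \mathcal{B}(\rmc(\calX, \rset^d))$: because $\calX$ is compact, the uniform norm satisfies $\normLigne{\omega - \omega_0} = \sup_{x \in D} |\omega(x) - \omega_0(x)|$ by continuity and density, so every open ball is a countable combination of $\mathcal{G}_\infty$-measurable sets; since $\rmc(\calX, \rset^d)$ is separable, each open set is a countable union of balls, whence $\mathcal{B} \subseteq \mathcal{G}_\infty$, the reverse inclusion being immediate from continuity of evaluations. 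This is exactly the step that fails on the full product space $(\rset^d)^\calX$, and is the reason we must restrict to sample-continuous processes; I expect it to be the \emph{main obstacle}, since it is where compactness and path continuity are genuinely used.

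With the $\sigma$-algebra identification in hand, I would invoke the monotone filtration property of relative entropy: along the increasing filtration $\mathcal{G}_n \uparrow \mathcal{G}_\infty$ the restricted divergences $\KL{\mu|_{\mathcal{G}_n}}{\nu|_{\mathcal{G}_n}}$ increase (by data processing, since $\mathcal{G}_n \subseteq \mathcal{G}_{n+1}$) and converge to $\KL{\mu|_{\mathcal{G}_\infty}}{\nu|_{\mathcal{G}_\infty}} = \KL{\mu}{\nu}$, which follows from the martingale convergence theorem applied to the density $\rmd\mu/\rmd\nu$ (or, equivalently, from the partition–supremum characterisation, noting that $\cup_n \mathcal{G}_n$ is a generating algebra). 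Since $\mathcal{G}_n$ is generated by $\mathrm{ev}_{x_1, \dots, x_n}$, we have $\KL{\mu|_{\mathcal{G}_n}}{\nu|_{\mathcal{G}_n}} = \KL{\mu_{x_1, \dots, x_n}}{\nu_{x_1, \dots, x_n}}$, so $\KL{\mu}{\nu} = \sup_n \KL{\mu_{x_1, \dots, x_n}}{\nu_{x_1, \dots, x_n}}$, which is bounded above by the supremum over all tuples. Combining with the first direction closes the argument.
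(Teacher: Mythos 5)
Your proof is correct, but it takes a genuinely different route from the paper's. For the reverse (hard) inequality, the paper argues analytically: it covers the compact space $\calX$ by balls of radius $1/(n+1)$, uses a partition of unity $\{\varphi_k\}_{k=1}^n$ to build finite-dimensional interpolants $\bfY^n_x = \sum_{k=1}^n \varphi_k(x)\bfY_{x_k}$, shows by uniform continuity that the laws $\mu_n$ of these interpolants converge to $\mu$ in $\Pens(\rmc(\calX,\rset^d))$, identifies $\KL{\mu_n}{\nu_n}$ with $\KL{\mu_{x_1,\dots,x_n}}{\nu_{x_1,\dots,x_n}}$ via a bijection onto $(\rset^d)^n$, and concludes using the lower semicontinuity of the Kullback--Leibler divergence under weak convergence. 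You instead argue measure-theoretically: you identify the Borel $\sigma$-algebra of $\rmc(\calX,\rset^d)$ with the cylindrical $\sigma$-algebra $\mathcal{G}_\infty$ generated by evaluations on a countable dense subset (the step where compactness, separability and sample continuity genuinely enter, as you rightly flag), and then invoke the monotone convergence of relative entropy along the increasing filtration $\mathcal{G}_n \uparrow \mathcal{G}_\infty$. Each approach buys something: the paper's construction is self-contained given the lower-semicontinuity lemma it cites, while yours replaces the interpolation device entirely by standard information-theoretic machinery, and in particular does not need the paper's (strictly speaking unnecessary, and for a general compact metric $\calX$ not even available) \emph{smooth} partition of unity. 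One point to make airtight in your write-up: the martingale-convergence justification of $\KL{\mu|_{\mathcal{G}_n}}{\nu|_{\mathcal{G}_n}} \uparrow \KL{\mu}{\nu}$ presupposes $\mu \ll \nu$ on $\mathcal{G}_\infty$; in the singular case you should rely on your parenthetical alternative --- the partition--supremum characterisation of $\operatorname{KL}$ together with approximation of $\mathcal{G}_\infty$-measurable sets by sets of the generating algebra $\cup_n \mathcal{G}_n$ --- which shows that the supremum is $+\infty$, so that equality still holds.
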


\begin{proof}
  First, note that for any $\{x_1, \dots, x_n\} \in \calX^n$, $n \in \nset$ we
  have using the data processing theorem \cite[Lemma 9.4.5]{ambrosio200gradient}
  \begin{equation}
    \label{eq:ineq_sens_uno}
    \KL{\mu}{\nu} \geq \KL{\mu_{x_1, \dots, x_n}}{\nu_{x_1, \dots, x_n}} \eqsp . 
  \end{equation}
  Since $\calX$ is compact, for any $n \in \nset$, there exists
  $\{x_1, \dots, x_n\}$ such that
  $\calX \subset \cup_{k=1}^n \ball{x_k}{1/(n+1)}$.  For any $n \in \nset$, let
  $\{\varphi_k\}_{k=1}^n$ be a smooth partition of unity associated with
  $\{\ball{x_k}{1/(n+1)}\}_{k=1}^n$.  For any $\rmc(\calX, \rset^d)$-valued
  random variable $\bfY$ and $n \in \nset$, denote $\bfY^n$ such that for any $x \in \calX$
  \begin{equation}
    \textstyle{\bfY^n_x = \sum_{k=1}^n \varphi_k(x) \bfY_{x_k} \eqsp . }
  \end{equation}
  Since $\bfY$ is continuous and $\mcx$ is compact we have that $\bfY$ is
  uniformly continuous. Hence, for any $\vareps > 0$, there exists $n \in \nset$
  such that for any $x_1, x_2$, $\normLigne{x_1 - x_2} \leq 1/(n+1)$,
  $\normLigne{\bfY_{x_1} - \bfY_{x_2}} \leq \vareps$. Therefore, we have that
  \begin{equation}
    \textstyle{\normLigne{\bfY^n_x - \bfY_x} \leq  \sum_{k=1}^n \varphi_k(x) \normLigne{\bfY_{x_k} - \bfY_x} \leq \vareps \eqsp . }
  \end{equation}
  Therefore, we have that
  $\lim_{n \to + \infty} \sup_{x \in \calX} \normLigne{\bfY^n_x - \bfY_x} =
  0$. Therefore, for any $n \in \nset$, denoting $\mu_n$ the distribution of
  $\bfY^n$, we get that $(\mu_n)_{n \in \nset}$ converges to $\mu$ in
  $\Pens(\rmc(\calX, \rset^d))$. For any $n \in \nset$, let
  $\bfa = \{a_k\}_{k=1}^n \in (\rset^d)^n$ and $f_n^\bfa: \ \calX \to \rset^d$
  such that for any $x \in \calX$,
  $f_n^\bfa(x) = \sum_{k=1}^n a_k \varphi_k(x)$. Denote
  $\rmc_n(\calX, \rset^d) = \ensembleLigne{f_n^\bfa}{\bfa \in
    (\rset^d)^n}$. Define $\varphi_n: \ (\rset^d)^n \to \rmc_n(\calX, \rset^d)$
  such that for any $\bfa \in (\rset^d)^n$, $\varphi_n(\bfa) = f_n^\bfa$. We
  have that $\varphi_n$ is a bijection. In addition, we have that
  $(\varphi_n)_\# \mu_n = \mu_{x_1, \dots, x_n}$. Therefore, using the data
  processing inequality we have that for any $n \in \nset$
  \begin{equation}
    \KL{\mu_n}{\nu_n} = \KL{\mu_{x_1, \dots, x_n}}{\nu_{x_1, \dots, x_n}} \eqsp . 
  \end{equation}
  In addition, since $(\mu_n, \nu_n) \to (\mu, \nu)$ we have using that the
  Kullback-Leibler divergence is lower semi-continuous \cite[Lemma
  1.4.3]{dupuis2011weak}
  \begin{equation}
    \textstyle{ \lim_{n \to +\infty} \KL{\mu_n}{\nu_n} = \lim_{n \to +\infty} \KL{\mu_{x_1, \dots, x_n}}{\nu_{x_1, \dots, x_n}} \geq \KL{\mu}{\nu} \eqsp . }
  \end{equation}
  Therefore, we get that
  \begin{equation}
      \textstyle{ \KL{\mu}{\nu} \leq \sup\ensembleLigne{\KL{\mu_{x_1, \dots, x_n}}{\nu_{x_1, \dots, x_n}}}{(x_1, \dots, x_n) \in \calX^n, \ n \in \nset} \eqsp . }
  \end{equation}
  Combining this result and \cref{eq:ineq_sens_uno}, we conclude the proof.
\end{proof}

The rest of the proof is similar to \Cref{sec:partition-definition}, except that
one needs to check that the obtained Gaussian process $\pi^\star$ is sample
continuous (up to a modification). This is done using that there exists
$\phi: \ \coint{0, +\infty}$ such that for any $x_1, x_2 \in \calX$,
\begin{equation}
  \textstyle{\expeLigne{\normLigne{\bfY_{x_1} - \bfY_{x_2}}^2} \leq \phi(\normLigne{x_1 - x_2}) \eqsp ,}
\end{equation}
such that $\int_0^{+\infty} \phi(\exp[-t^2]) \rmd t < +\infty$, see
\cite{fernique1975regularite} and that the obtained Gaussian process with
distribution $\pi^\star$ has same mean and covariance as $\pi^0$.

\section{Induced Spatial Process}
\label{sec:spatial_process}
In this section we show that our spectral decomposition implicitly induces a
process on the original spatial space with a diffusion coefficient given by our
choice of kernel.

The strategy of the proof is as follows. First, we show the existence of a
doubly stochastic process (time/input) such that its ``horizontal'' and
``vertical'' slices are correct. More precisely, we define a doubly stochastic
process such that for a fixed \emph{finite} number of positions, it corresponds
to the solution of some stochastic differential equation. In addition, for any
fixed \emph{finite} time stamps, it is a Gaussian process.

In order to correctly handle solutions of stochastic differential equations we
\emph{do not} consider a process on
$\rset^{\coint{0,+\infty} \times \mathcal{X}}$, which is not regular enough, but
on $\rmc(\coint{0,+\infty}, \rset)^{\mathcal{X}}$. This process is defined in
\Cref{sec:doubly-index-proc}. 

Once we are equipped with that doubly indexed stochastic process, we can define
another doubly stochastic process in \Cref{sec:multivariate_ou}, denoted $\bfY$
such that for any finite number of input points it corresponds to a (correlated)
Ornstein-Uhlenbeck. In addition, for any \emph{finite} time stamps it is a
spatial stochastic process interpolating between the input stochastic process
and a Gaussian process.

In \Cref{sec:induced-process}, we conclude by showing that the distribution of
the reconstructed process coincides with $\bfY$ if we let the number of spectral
components $M \to +\infty$. 

\subsection{A Doubly Indexed Process}
\label{sec:doubly-index-proc}

First, we define a stochastic process doubly indexed by the time ($t \geq 0$)
and by the input component ($x \in \mathcal{X}$). Consider the family of
distribution
$\ensembleLigne{\eta_{\mathsf{A}}}{\mathsf{A}=\{ x_i\}_{i=1}^n, x_i \in \mathcal{X}, \ n
  \in \nset}$ such that for any $\mathsf{A}=\{ x_i\}_{i=1}^n$ we have that
$\eta_{\mathsf{A}}$ is the distribution of 
$\{\{\rmK^{1/2}(\mathsf{A}, \mathsf{A})(\int_0^t g(s) \rmd \bfB_s^i)_{i=1}^n\}_{t \geq 0}\}_{i=1}^n$
with $((\bfB_t^i)_{i=1}^n)_{t \geq 0}$ a family of $n$ independent Brownian
motions and $g: \ \coint{0, +\infty} \to \coint{0, +\infty}$ given for any $t \geq 0$ by
\begin{equation}
\textstyle{g(t) = \beta_t^{1/2} \exp[\int_0^t \beta_s \rmd s / 2] . }
\end{equation}
We also denote $G(t) = \int_0^t g(s)^2 \rmd s = \rme^{\int_0^t \beta_s \rmd s} - 1$ for any $t \geq 0$. 
Using the Kolmogorov extension theorem, we have that $\eta_\mathsf{A}$
is uniquely defined by a mean function
$m_{\mathsf{A}}: \ \coint{0,+\infty} \to \rset$ and a covariance matrix
function $ \Sigma_{\mathsf{A}}: \ \coint{0,+\infty} \to \rset$ such that for
any $t \geq 0$,
\begin{equation}
  m_{\mathsf{A}}(t) = 0 , \qquad \Sigma_{\mathsf{A}}(s, t) = \min(G(s),G(t)) \rmK(\mathsf{A}, \mathsf{A}) . 
\end{equation}
Let $\mathsf{B} \subset \mathsf{A}$ and we have that $\eta_\mathsf{B}$
is uniquely defined by a mean function
$m_{\mathsf{B}}: \ \coint{0,+\infty} \to \rset$ and a covariance matrix
function $ \Sigma_{\mathsf{B}}: \ \coint{0,+\infty} \to \rset$ such that for
any $t \geq 0$,
\begin{equation}
  m_{\mathsf{B}}(t) = 0 , \qquad \Sigma_{\mathsf{B}}(s, t) = \min(G(s),G(t)) \rmK(\mathsf{B}, \mathsf{B}) . 
\end{equation}
We have that $(\pi_{\mathsf{B}}^{\mathsf{A}})_{\#} \eta_{\mathsf{A}}$ has mean
function $m_{\mathsf{B}}$ and covariance matrix function
$\Sigma_{\mathsf{B}}$. Therefore, we have that
$(\pi_{\mathsf{B}}^{\mathsf{A}})_{\#} \eta_{\mathsf{A}} =
\eta_{\mathsf{B}}$. Using the Kolmogorov extension theorem there exists
$\eta \in \mathcal{P}(\rmc(\coint{0,+\infty}, \rset)^\mathcal{X})$ such that for
any for any $\mathsf{A}=\{ x_i\}_{i=1}^n$,
$(\pi_{\mathsf{A}})_{\#} \eta = \eta_{\mathsf{A}}$. We denote $\mathbf{H}$ a
random variable with distribution $\eta$. We have that
\begin{enumerate}[wide, labelwidth=!, labelindent=0pt, label=(\alph*)]
\item Since $\rmK(x,x) = 1$, for any $x \in \mathcal{X}$,
  $\{\mathbf{H}_{t}(x)\}_{t \geq 0}$ has distribution $\{\int_0^t g(s) \rmd \bfB_s\}_{t \geq 0}$.
\item For any
$\mathsf{A} = \{x_1, \dots, x_n\}$,
$\{\{\mathbf{H}_{t}(x_i)\}_{i=1}^n\}_{t \geq 0}$ has the same distribution as
$\{\{\rmK^{1/2}(\mathsf{A}, \mathsf{A})\int_0^t g(s) \rmd (\bfB_s^i)_{i=1}^n\}_{i=1}^n\}_{t \geq 0}$
with $((\bfB_t^i)_{i=1}^n)_{t \geq 0}$ a family of $n$-independent Brownian
motions.
\item For any $t \geq 0$ we have that
$\{\mathbf{H}_{t}(x)\}_{x \in \mathcal{X}}$ is a Gaussian process with zero mean and kernel
$G(t) \rmK$.
\end{enumerate}

\subsection{Multivariate Ornstein-Uhlenbeck Process}
\label{sec:multivariate_ou}

\begin{proposition}{}{}
  Assume that $(\bfY_0(x))_{x \in \mathcal{X}}$ is a stochastic process and
  $\mu: \ \mathcal{X} \to \rset$.  Let $\bfY$ be a stochastic process on
  $\coint{0,+\infty}\times \mathcal{X}$ given for any
  $t, x \in \coint{0,+\infty}\times \mathcal{X}$ by
  \begin{equation}
    \label{eq:definition_Y}
    \textstyle{
      \bfY_t(x) =  \rme^{-\int_{0}^t \beta_s ds/2} ~\bfY_0(x) + (1 - \rme^{-\int_{0}^t \beta_s ds/2} ) \mu(x) + \rme^{-\int_0^t \beta_s \rmd s/2} \mathbf{H}_t(x) .
      }
    \end{equation}
    We have that
  for any $x \in \mathcal{X}$,
  $\bfY_t(x) \xrightarrow[t \rightarrow 0]{a.s.} \bfY_0(x)$ and
  $\bfY_t(x) \xrightarrow[t \rightarrow \infty]{a.s.} \bfY_\infty(x)$.
  In addition, we have that
  \begin{enumerate}[label=(\alph*)]
  \item For any $t \geq 0$, conditionally to $\mathbf{Y}_0$, $\mathbf{Y}_t$ is a
    Gaussian process with mean
    $\mu_t = e^{- \int_{0}^t \beta_s ds / 2} ~\mathbf{Y}_0 + (1 -
      \rme^{- \int_{0}^t \beta_s ds/2} ) \mu $ and kernel $\Sigma_t = (1 - \rme^{-\int_{0}^t \beta_s ds} ) \rmK$.
    \item For any $\x = \{x_1, \dots, x_n\} \in \mathcal{X}$, $\{\mathbf{Y}_t(\x)\}_{t \geq 0}$ is a (weak) solution to
      \begin{equation}\label{eq:forward_SDE_sp}
  \rmd \tilde{\bfY}_t(\x) = (\mu(\x) -\tilde{\bfY}_t(\x))~(\beta_t/2) \rmd t + \sqrt{\beta_t {\rmK}(\x,\x)}  \rmd \bfB_t  \eqsp .
\end{equation}
  \end{enumerate}
\end{proposition}

\begin{proof}
  We divide the proof into two parts.
  \begin{enumerate}[wide, labelwidth=!, labelindent=0pt, label=(\alph*)]
  \item       First, we show that for any $t \geq 0$, conditionally to $\mathbf{Y_0}$,
    $\mathbf{Y}_t$ is a Gaussian process with mean
    $\mu_t = e^{-\int_{0}^t \beta_s ds/2} ~\mathbf{Y}_0 + (1 -
    \rme^{-\int_{0}^t \beta_s ds / 2} ) \mu $ and kernel
    $\Sigma_t = (1 - \rme^{-\int_{0}^t \beta_s ds} ) k$. This is a direct
    consequence of the definition of $\mathbf{H}$.
  \item Second, we show that for any $\x = \{x_1, \dots, x_n\} \in \mathcal{X}$, $\{\mathbf{Y}_t(\x)\}_{t \geq 0}$ is a solution to
      \begin{equation*}
  \rmd \tilde{\bfY}_t(\x) = (\mu(\x) -\tilde{\bfY}_t(\x))(\beta_t/2) \rmd t + \sqrt{\beta_t {\rmK}(\x,\x)}  \rmd \bfB_t  \eqsp .
\end{equation*}
First, we have that
  \begin{equation}
    \textstyle{
      \bfY_t(\x) =  \rme^{- \int_{0}^t \beta_s ds/2} ~\bfY_0(\x) + (1 - \rme^{- \int_{0}^t \beta_s ds/2} ) \mu(\x) + \rme^{-\int_0^t \beta_s \rmd s / 2}\mathbf{H}_t(\x) .
      }
    \end{equation}
    By definition of $\mathbf{H}$, this can be rewritten as
  \begin{align}
      \bfY_t(\x) &=      \textstyle{\rme^{- \int_{0}^t \beta_s ds/2} ~\bfY_0(\x) + (1 - \rme^{- \int_{0}^t \beta_s ds/2} ) \mu(\x) + \int_0^t \beta_s^{1/2} \rme^{-\int_s^t \beta_u \rmd u /2} \rmK(\x,\x)^{1/2} \rmd \bfB_s .
                   } \\
    &=      \textstyle{\rme^{- \int_{0}^t \beta_s ds / 2} ~\bfY_0(\x) + (1 - \rme^{- \int_{0}^t \beta_s ds / 2} ) \mu(\x) + \rme^{-\int_0^t \beta_u \rmd u /2} \int_0^t \beta_s^{1/2} \rme^{\int_0^s \beta_u \rmd u /2} \rmK(\x,\x)^{1/2} \rmd \bfB_s .
      }
  \end{align}
  For any $t \geq 0$, we have by (stochastic) integration by parts 
  \begin{align*}
    \textstyle{\int_0^t (\beta_s/2) (\mu(\x) - \bfY_s(\x))} \rmd s &= \textstyle{\int_0^t ((-\beta_s/2)\rme^{-\int_0^s \beta_u \rmd u /2})(\bfY_0(\x)-\mu(\x)) \rmd s}
    \\
                                                                   & \qquad + \textstyle{\int_0^t (-\beta_s/2)\rme^{-\int_0^s \beta_u \rmd u/2} \int_0^s \beta_u^{1/2} \rme^{\int_0^u \beta_v \rmd v /2} \rmK(\x,\x)^{1/2} \rmd \bfB_u \rmd s}  \\
    &= \textstyle{\int_0^t ((-\beta_s/2)\rme^{-\int_0^s \beta_u \rmd u /2})(\bfY_0(\x)-\mu(\x)) \rmd s}
    \\
                                                                   & \qquad - \textstyle{\int_0^t \rme^{-\int_0^s \beta_u \rmd u/2} \beta_s^{1/2} \rme^{\int_0^s \beta_u \rmd u /2} \rmK(\x,\x)^{1/2} \rmd \bfB_s} \\
                                                                   & \qquad \textstyle{+ \rme^{-\int_0^t \beta_s\rmd s / 2} \int_0^t \beta_s^{1/2} \rme^{\int_0^s \beta_u \rmd u / 2} \rmK(\x, \x)^{1/2} \rmd \bfB_s. } \\
    &= \textstyle{\int_0^t ((-\beta_s/2)\rme^{-\int_0^s \beta_u \rmd u /2})(\bfY_0(\x)-\mu(\x)) \rmd s}
    \\
                                                                   & \qquad - \textstyle{\int_0^t  \beta_s^{1/2}  \rmK(\x,\x)^{1/2} \rmd \bfB_s + \rme^{-\int_0^t \beta_s\rmd s / 2} \int_0^t \beta_s^{1/2} \rme^{\int_0^s \beta_u \rmd u / 2} \rmK(\x, \x)^{1/2} \rmd \bfB_s }
\\
    &= \textstyle{(\rme^{-\int_0^t \beta_u \rmd u /2} - 1)(\bfY_0(\x)-\mu(\x))}
    \\
                                                                   & \qquad - \textstyle{\int_0^t  \beta_s^{1/2}  \rmK(\x,\x)^{1/2} \rmd \bfB_s +  \int_0^t \beta_s^{1/2} \rme^{-\int_s^t \beta_u \rmd u / 2} \rmK(\x, \x)^{1/2} \rmd \bfB_s. }                                                                     
  \end{align*}
  Therefore, we have that
  \begin{align}
    &\textstyle{\bfY_0(\x) + \int_0^t (\beta_s/2)(\mu(\x) - \bfY_s(\x)) \rmd s + \int_0^t \beta_s^{1/2} \rmK(\x,\x)^{1/2} \rmd \bfB_s} \\
    & \qquad \qquad = \textstyle{\rme^{- \int_{0}^t \beta_s ds / 2} ~\bfY_0(\x) + (1 - \rme^{- \int_{0}^t \beta_s ds / 2} ) \mu(\x) + \rme^{-\int_0^t \beta_u \rmd u /2} \int_0^t \beta_s^{1/2} \rme^{\int_0^s \beta_u \rmd u /2} \rmK(\x,\x)^{1/2} \rmd \bfB_s .
      }
  \end{align}
Therefore
\begin{equation}
  \bfY_t(\x) = \textstyle{\bfY_0(\x) + \int_0^t (\beta_s/2)(\mu(\x) - \bfY_s(\x)) \rmd s + \int_0^t \beta_s^{1/2} \rmK(\x,\x)^{1/2} \rmd \bfB_s} . 
\end{equation}
Hence we have that $(\bfY_t(\x))_{t \geq 0}$ is a solution of \eqref{eq:forward_SDE_sp}, see \citet[Section 3, Definition 3.1]{karatzas1991brownian}.
  \end{enumerate}
\end{proof}

\subsection{Induced Process}
\label{sec:induced-process}

Now we aim at showing that the distribution in the spatial domain induced by the Ornstein-Uhlenbeck process in the spectral space (as per our introduced method) is the same process as in \cref{sec:multivariate_ou}.

For any $M \in \nset$, we define the collection of random variables
$\{\mathbf{H}_t^M(x)\}_{t \geq 0, x \in \mathcal{X}}$ given by
$\mathbf{H}_t^M(x) = \sum_{m=0}^M \lambda_m^{1/2} \int_0^t g(s) \rmd \bfB_s^m e_m(x)$.  Note that
$\mathbf{H}^M$ is a $\rmc(\coint{0,+\infty}, \rset)^{\mathcal{X}}$-valued random
variable.

\begin{proposition}{}{}
  For any $M \in \nset$, denote
  $\eta^M \in \mathcal{P}(\rmc(\coint{0,+\infty}, \rset)^{\mathcal{X}})$ the
  distribution of $\mathbf{H}^M$. Then, we have that
  $\lim_{M \to +\infty} \eta^M = \eta$, where $\eta$ is the distribution of $\mathbf{H}$.
\end{proposition}

\begin{proof}
  In order to prove that $\lim_{M \to +\infty} \eta^M = \eta$, we only need to
  prove that for any $\x = \{x_1, \dots, x_n\}$ with $n \in \nset$ and
  $x_1, \dots, x_n \in \mathcal{X}$ we have that
  $(\mathbf{H}^M(x_1), \dots, \mathbf{H}^M(x_n)) \to (\mathbf{H}(x_1), \dots,
  \mathbf{H}(x_n))$ in distribution (in the space
  $\mathcal{P}(\rmc(\coint{0,+\infty}, \rset)^n)$). Let
  $\x = \{x_1, \dots, x_n\}$ with $n \in \nset$ and
  $x_1, \dots, x_n \in \mathcal{X}$. We show that
  $(\mathbf{H}^M(x_1), \dots, \mathbf{H}^M(x_n))$ is a Cauchy sequence in
  probability. We recall that $\rmc(\coint{0,+\infty}, \rset)^n$ is a metric
  space with distance $d$ given for any $f = (f^1, \dots, f^n)$,
  $g = (g^1, \dots, g^n) \in \rmc(\coint{0,+\infty}, \rset)^n$ by 
  \begin{equation}
    \textstyle{d(f,g) = \sum_{\ell=1}^n \sum_{j \in \nset} 2^{-j} \| f_{\ccint{0,j}}^\ell - g_{\ccint{0,j}}^\ell\|_\infty / (1 + \| f_{\ccint{0,j}}^\ell - g_{\ccint{0,j}}^\ell\|_\infty) , }
  \end{equation}
  where $f_{\ccint{0,a}}$ is the restriction of $f$ on $\ccint{0,a}$. In what
  follows, we assume without loss of generality that $n = 1$ and $\x =
  \{x\}$. Let $\vareps >0$ and $M, N \in \nset$. We have
  \begin{equation}
    \textstyle{
      d(\mathbf{H}^M(x), \mathbf{H}^{M+N}(x)) \leq \sum_{j \in \nset} 2^{-j}\sum_{\ell=M+1}^{M+N} \lambda_\ell^{1/2} e_\ell(x) \sup_{t \in \ccint{0,j}} |\int_0^t g(s) \rmd \bfB_s^\ell| .
      }
    \end{equation}
    Using the fact that $( |\int_0^t g(s) \rmd \bfB_s^\ell|)_{t \geq 0}$ is a
    submartingale and Doob's maximal inequality, we have that
    \begin{align}
      \expe{d(\mathbf{H}^M(x), \mathbf{H}^{M+N}(x))^2} &\leq \textstyle{ \sum_{j_1, j_2 \in \nset} 2^{-(j_1+j_2)} \sum_{\ell=M+1}^{M+N} \lambda_\ell e_\ell(x)^2 }\\
      & \qquad \textstyle{ \times \expe{\sup_{t \in \ccint{0,j_1}} |\int_0^t g(s) \rmd \bfB_s^\ell| \sup_{t \in \ccint{0,j_2}} |\int_0^t g(s) \rmd \bfB_s^\ell|}} \\
                                                       &\leq \textstyle{ \sum_{j_1, j_2 \in \nset} 2^{-(j_1+j_2)} \sum_{\ell=M+1}^{M+N} \lambda_\ell e_\ell(x)^2 \expe{\sup_{t \in \ccint{0,\max(j_1,j_2)}} |\int_0^t g(s) \rmd \bfB_s^\ell|^2}} \\
                                                       &\leq 2 \textstyle{ \sum_{j_1, j_2 \in \nset} 2^{-(j_1+j_2)} \sum_{\ell=M+1}^{M+N} \lambda_\ell e_\ell(x)^2 \expe{(\int_0^{\max(j_1,j_2)} g(s) \rmd \bfB_s^\ell)^2}} \\
      &\leq \textstyle{2 \sum_{\ell=M+1}^{\infty} \lambda_\ell e_\ell(x)^2 \sum_{j_1, j_2 \in \nset} 2^{-(j_1+j_2)} G(\max(j_1,j_2)) .}
    \end{align}
    Since $G$ is bounded and $\sum_{j_1, j_2 \in \nset} 2^{-(j_1+j_2)}  < +\infty$ and
    $\lim_{M \to + \infty} \sum_{\ell=M+1}^{\infty} \lambda_\ell e_\ell(x)^2 =
    0$ using Mercer's theorem, we get that for any $\vareps >0$, there exists
    $M \in \nset$ such that for any $N \in \nset$
    \begin{equation}
      \textstyle{\expe{d(\mathbf{H}^M(x), \mathbf{H}^{M+N}(x))^2} \leq \vareps^3 . }
    \end{equation}
    Therefore, combining this result and Markov's inequality, we get that for any
    $\vareps >0$, there exists $M \in \nset$ such that for any $N \in \nset$
    \begin{equation}
      \textstyle{\mathbb{P}(d(\mathbf{H}^M(x), \mathbf{H}^{M+N}(x)) \geq \vareps) \leq \vareps .}
    \end{equation}
    Hence, $(\mathbf{H}^M(x))_{M \in \nset}$ is tight. In addition, we have that
    for each limit point $\tilde{\mathbf{H}}(x)$, for any $t_1, \dots, t_p$ with
    $t_1, \dots t_p \geq 0$ and $p \in \nset$,
    $(\tilde{\mathbf{H}}_{t_1}(x), \dots, \tilde{\mathbf{H}}_{t_p}(x))$ is
    Gaussian with same mean and covariance matrix as
    $(\mathbf{H}_{t_1}(x), \dots, \mathbf{H}_{t_p}(x))$. Therefore, we have that
    $(\mathbf{H}^M(x))_{M \in \nset}$ converges to $\mathbf{H}(x)$, which
    concludes the proof. 
\end{proof}

In what follows, we denote $\{\bfY_t^M(\x)\}_{t \geq 0, x \in \mathcal{X}}$, the
reconstructed process given by
\begin{equation}
  \textstyle{\bfY_t^M(\x) = \mu(x) + \sum_{m=0}^M \lambda_m^{1/2} \bfZ_t^m e_m(x) , }
\end{equation}
where
\begin{equation}
  \textstyle{
    \bfZ_t^m = \rme^{-\int_0^t \beta_s \rmd s / 2} \bfZ_0^m + \rme^{-\int_0^t \beta_s \rmd s / 2}\int_0^t g(s) \rmd \bfB_s^m .
    }
  \end{equation}

  We are now ready to state the main result of this section.

\begin{proposition}{}{}
  For any $M \in \nset$ we have
  \begin{equation}
    \textstyle{
      \bfY_t^M(\x) = \mu(x) + \rme^{-\int_0^t \beta_s \rmd s / 2} \sum_{m=0}^M \lambda_m^{1/2}  \bfZ_0^m e_m(\x) + \mathbf{H}_t^M(\x) .
      }
    \end{equation}
    In addition, denote
    $\eta^M \in \mathcal{P}(\rmc(\coint{0,+\infty}, \rset)^{\mathcal{X}})$ the
    distribution of $\bfY^M$. Then $\lim_{M \to + \infty} \eta^M = \eta$, where
    $\eta \in \mathcal{P}(\rmc(\coint{0,+\infty}, \rset)^{\mathcal{X}})$ is the
    distribution of $\bfY$ defined in \eqref{eq:definition_Y}.
\end{proposition}

\section{Experimental Details}
\label{sec:experimental-details}

In the following sections we provide details on our experimental procedures.
The models and experiments have been implemented in Jax~\citep{jax2018github}.

\subsection{Software and Data}

The core set of tools in Python \citep{van1995python} enabled this work, including Jax \citep{jax2018github}, Optax \citep{deepmind2020jax}, Haiku \citep{haiku2020github}, Hydra \citep{Yadan2019Hydra}, Numpy \citep{harris2020}, Scipy \citep{2020SciPy-NMeth} and Matplotlib \citep{Hunter2007}.
\subsection{1D Stochastic Processes}

\paragraph{Quadratic} A synthetic dataset constructed to exhibit clear bi-modality. Samples consist of function evaluations of \(f(x) := a x^2 + \epsilon\) with \(a \sim \text{Unif}(\{-1,1\})\) and \(\epsilon \sim \text{N}(0,10)\) on a uniformly sampled grid of 100 evaluation points in \([-10,10]\). We generate a dataset of $5000$ samples which we scaled to have unit variance and split in the ratio of $[0.8,0.1,0.1]$ for training, validation and testing respectively.

\paragraph{Melbourne} A real dataset recording the number of pedestrians on streets in Melbourne, each sample being a period of $24$ hours with readings taken hourly (giving a grid of $24$ evaluation points per sample). The dataset is sourced from \url{http://www.timeseriesclassification.com/description.php?Dataset=MelbournePedestrian}. We pre-processed the dataset by removing rows with unobserved values and rescaling the dataset to have unit variance. The pre-processed dataset contains 2319 samples and we split the dataset in the ratio of \([0.8, 0.1, 0.1]\) for training, validation and testing respectively.

\paragraph{Gridwatch} A real dataset recording the energy demand of the UK grid, each sample being a period of $24$ hours with readings taken every five minutes (giving a grid of $288$ evaluation points per sample). The dataset is sourced from \url{https://www.gridwatch.templar.co.uk/download.php} by selecting the `demand' field. We performed a number of pre-processing steps:
\begin{itemize}
    \item Select days where the readings were taken exactly on the fifth minute,
    \item Remove days where the difference between two consecutive readings was beyond the 99.5\textsuperscript{th} percentile of all differences,
    \item Remove days where consecutive readings did not vary for half an hour or more,
    \item Centre the dataset so each sample has zero mean,
    \item Scale the dataset so each sample has unit variance.
\end{itemize}
This resulted in $1013$ samples which we split in the ratio of \([0.8, 0.1, 0.1]\) for training, validation and testing respectively. 

\subsection{1D Gaussian Process Regression}
For the 1D regression tasks we generate synthetic Gaussian process datasets by sampling from GP posteriors with EQ and Matern kernels. The sampling process is as follows:
\begin{enumerate}
    \item Sample $100$ \(x\) values uniformly from \([-2, 2]\),
    \item Draw a single sample from the prior GP at the sampled \(x\) values,
    \item Sample the number of context points uniformly between \(1\) and \(50\),
    \item Sample the context pairs uniformly from the prior GP samples,
    \item Compute the posterior distribution given the context pairs and draw a single sample from this posterior.
\end{enumerate}
The evaluation set is taken to be the entire \(100\) \(x\) values. The particular kernels we consider are the Exponentiated Quadratic (EQ) and the Matern \(5/2\), using lengthscale \(1.00\) and variance \(1.00\).

\subsection{Methods}

\paragraph{Gaussian Processes} We used a Matern-1/2 kernel with lengthscale and noise variance learnt by maximising the marginal likelihood. 

\paragraph{Neural Processes} For the unconditional modelling of stochastic processes we modified a PyTorch implementation of Neural Processes available online at \url{https://github.com/EmilienDupont/neural-processes}. Our modifications adapted the implementation to unconditional training/sampling by implementing Equation 7 of \cite{garnelo2018neural}. We set the dimension of the representation of context points and the dimension of the latent variable to be $512$ ($1024$ for Gridwatch dataset). The encoder and decoder were three-layered FCNs with $512$ ($1024$ for Gridwatch dataset) units and ReLU activations, trained to $1000$ epochs.

For the predictive task on 1D datasets and the NP, ANP and AGNP benchmarks on the 1D Gaussian process regression task (Section \ref{sec:1d_gp_regression}), we used the implementation available online at \url{https://github.com/wesselb/neuralprocesses}. The architectural choices for NPs, ANPs and AGNPs are taken from \citet{markou2022Practical}, which are as follows:
\begin{itemize}
    \item NPs: the encoder consists of a deterministic path of $6$ layers with $128$ units and a stochastic path of $2$ layers with $128$ units, each with a mean aggregation layer. The decoder consists of $1$ layer of $128$ units. The dimension of the latent variable was $128$.
    \item ANPs: same as NPs with a dot product self attention layer \citep{vaswani2017attention} with $8$ heads in place of the mean aggregation layer.
    \item AGNPs: the encoder consists of the same deterministic path as the ANP and there is no stochastic path. $512$ basis functions are used to parametrise the predictive covariance.
\end{itemize}
Each model is trained using an Adam optimiser \citep{kingma2014method} with learning rate \(5\times10^{-4}\) for $100$ epochs. Each epoch consists of $1024$ batches of $8$ tasks.

\paragraph{Spectral Diffusion Models}

The architecture of the score network $\bm{s}_\theta$ is given by a multilayer
perceptron with $6$ hidden layers with $512$ units each.  We use sinusoidal activation functions. For the context embedding network $c = \textrm{enc}_\theta(\{x^i,y^i\}_{i \in C})$ we use $4$ layers of the bi-dimensional attention block proposed by \citet{dutordoir2022neural} with $4$ self attention heads and $128$ units each. We aggregate the output of the bi-dimensional attention layers using mean aggregation over the number of context points and the process dimension, before finally passing through a single output layer of $128$ units with GELU activation.

All models are trained by the stochastic optimiser Adam \citep{kingma2014method}
with parameters $\beta_1=0.9$, $\beta_2=0.999$, batch-size of $512$ data-points.
The learning rate is annealed with a linear ramp from $0$ to $5000$ steps, reaching the maximum value of \(2\times10^{-4}\), and from then with a cosine schedule down to $0$ after $50000$ iterations in total.

Following \citet{song2020score}, the diffusion model's
diffusion coefficient is parameterised as $g(t) = \sqrt{\beta(t)}$ with
$\beta: \ t \mapsto \beta_{\min} + (\beta_{\max} - \beta_{\min}) \cdot t$, 
where we found $\beta_{\min}=0.1$ and $\beta_{\max}=8$ to work best.

\subsection{Functional MMD and two-sample test power}

Our quantitative results are based on the functional MMD and kernel two-sample hypothesis test of \citet{wynne2022kernel}. 

The hypothesis test considered uses the null hypothesis that samples from the model and dataset are from the same distribution, while the alternative states that they are not. We know that the null is false as the distribution of samples from the model will never truly match the data distribution. Therefore, we expect a perfectly powerful test to always reject the null hypothesis. By choosing a specific test with finite power (for example by restricting the number of samples used when computing the test statistic), we can compare the quality of samples from different models by computing the power of the specific test on each of the sets of samples. If the test exhibits a lower power on a set of samples, it indicates those samples are harder to distinguish from the data distribution and thus can be considered of higher quality.

The specific test we choose is the kernel two-sample test with the ID kernel computed on 10 samples. We perform 1000 tests to estimate the power. Confidence intervals are obtained by repeating the procedure on models trained from different seeds.

\subsection{Additional Samples}
\label{app:additional_samples}

In Figure \ref{fig:1d_cond_1_5} we present predictive samples on 1D datasets with small context sets.

\begin{figure}[h]
     \centering
     \begin{subfigure}[b]{0.48\textwidth}
         \centering
         \includegraphics[width=\textwidth]{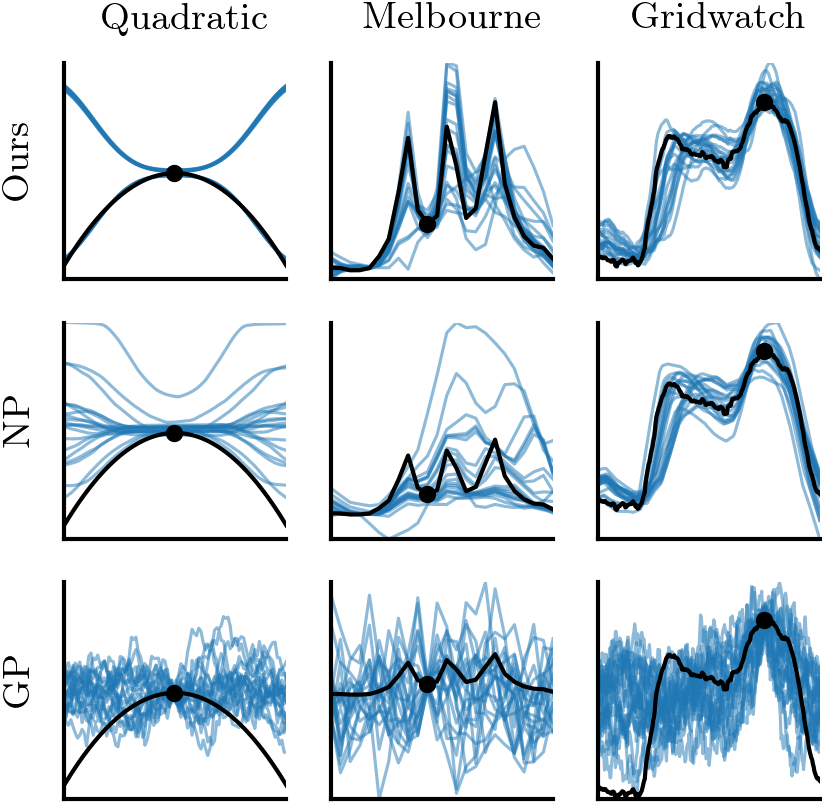}
     \end{subfigure}
     \hfill
     \begin{subfigure}[b]{0.48\textwidth}
         \centering
         \includegraphics[width=\textwidth]{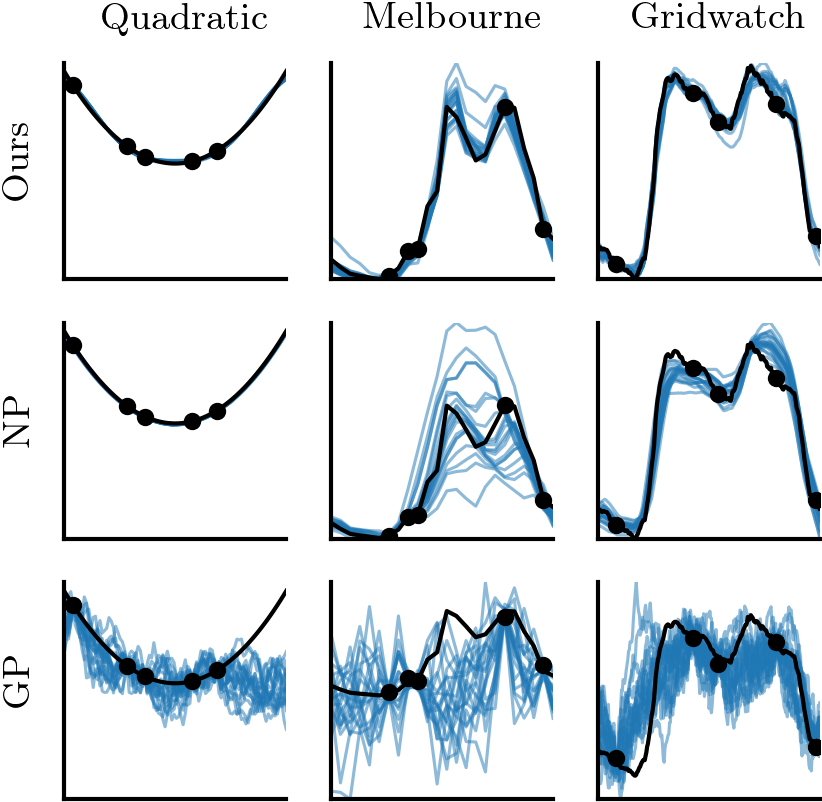}
     \end{subfigure}
        \caption{Predictive samples from our model, NPs and GPs conditioned on 1 (left) and 5 (right) context points.}
        \label{fig:1d_cond_1_5}
\end{figure}

In \Cref{fig:mnist_samples} we visualise some uncurated samples from our model on MNIST.

\begin{figure}
    \centering
    \includegraphics[width=0.5\textwidth]{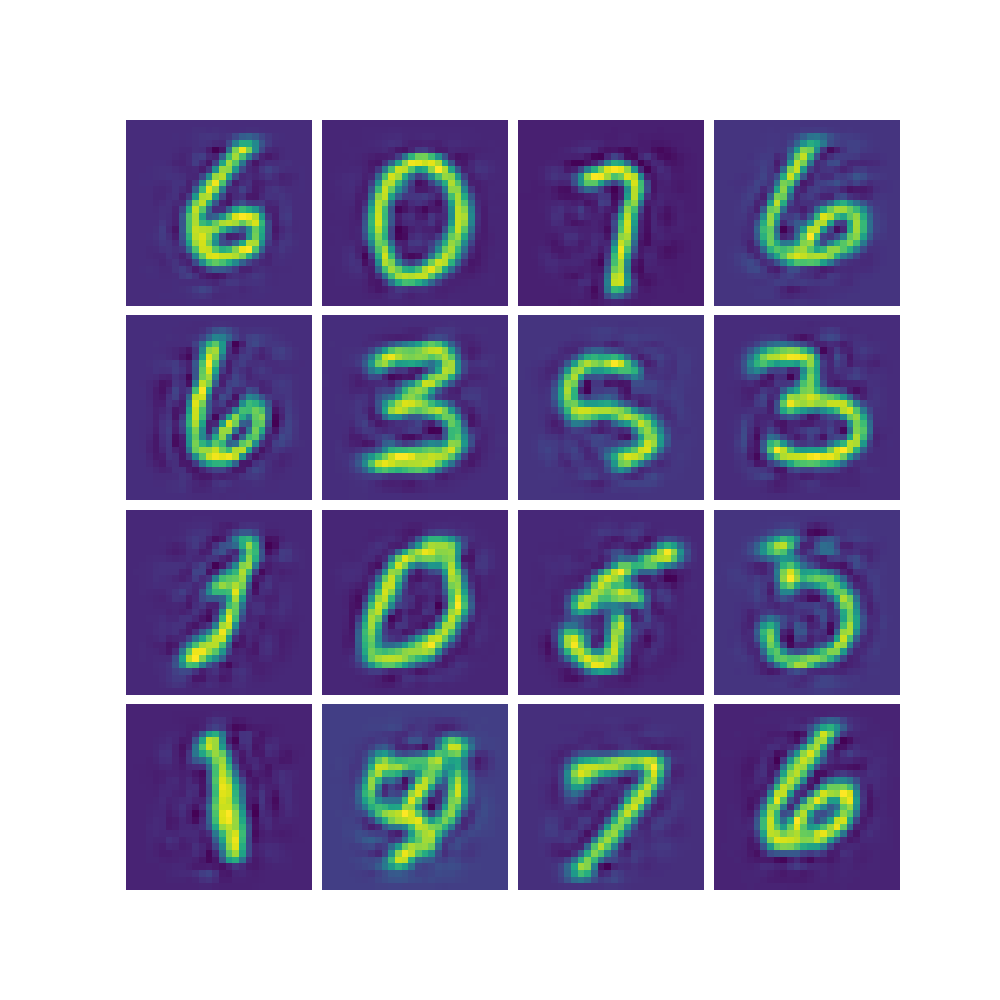}

    \caption{Uncurated MNIST samples, covariance kernel, \(M=100\).}
    \label{fig:mnist_samples}
\end{figure}

\subsection{Further Ablation Studies}

In \Cref{fig:rbf_examples,fig:rbf_noising_trajectory,fig:matern_examples,fig:matern_noising_trajectory} we demonstrate the effect of different analytic kernels on the quality of MNIST samples from the Spectral Diffusion Model.

\begin{table}
    \centering
    \setlength{\tabcolsep}{0pt}
    \begin{tabular}{c c}
        \includegraphics[width=.3\textwidth]{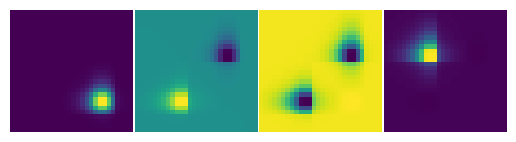} & \includegraphics[width=.3\textwidth]{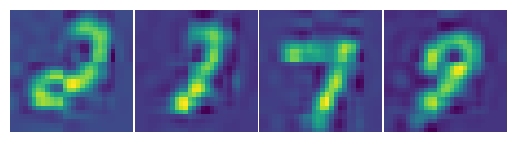} \\
        \includegraphics[width=.3\textwidth]{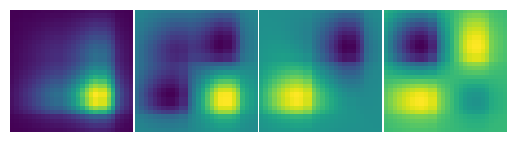} & \includegraphics[width=.3\textwidth]{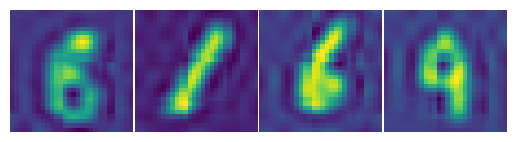} \\
        \includegraphics[width=.3\textwidth]{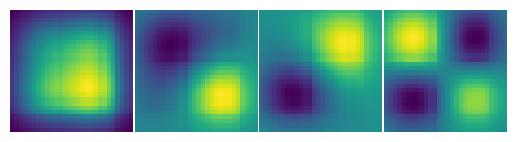} & \includegraphics[width=.3\textwidth]{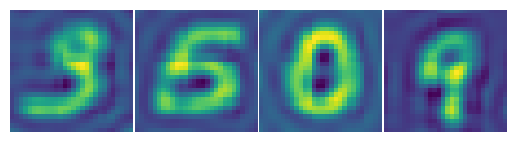} \\
        \includegraphics[width=.3\textwidth]{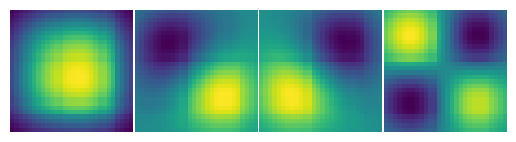} & \includegraphics[width=.3\textwidth]{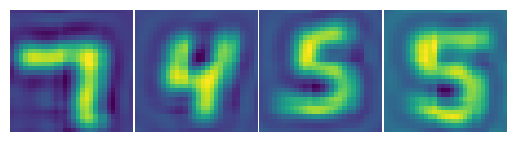}\\
        \includegraphics[width=.3\textwidth]{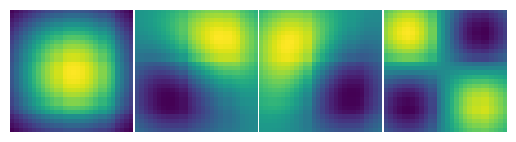} & \includegraphics[width=.3\textwidth]{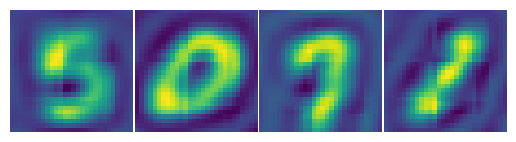} \\
    \end{tabular}
    \captionof{figure}{Eigenfunctions [left] and samples from our model [right] for RBF kernel with lengthscale in [1,2,3,4,5] [top to bottom respectively].}
    \label{fig:rbf_examples}
\end{table}

\begin{figure}[!t]
    \centering
    \raisebox{-0.5\height}{\includegraphics[width=0.76\textwidth]{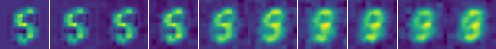}}
    \hfill
    \raisebox{-0.5\height}{\includegraphics[width=0.10\textwidth]{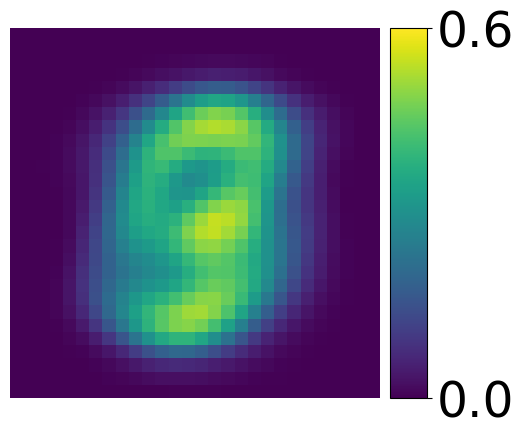}} 
    \hfill
    \raisebox{-0.5\height}{\includegraphics[width=0.10\textwidth]{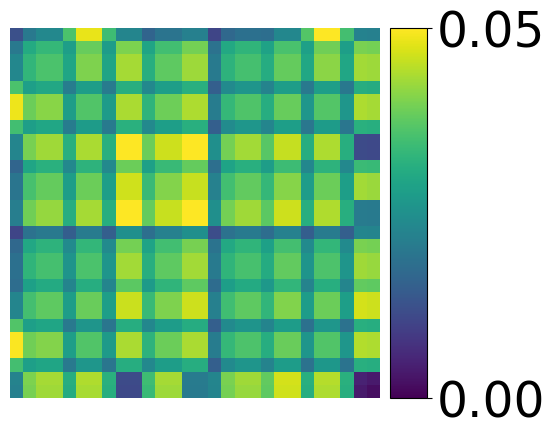}}\\
    \raisebox{-0.5\height}{\includegraphics[width=0.76\textwidth]{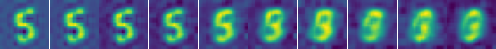}}
    \hfill
    \raisebox{-0.5\height}{\includegraphics[width=0.10\textwidth]{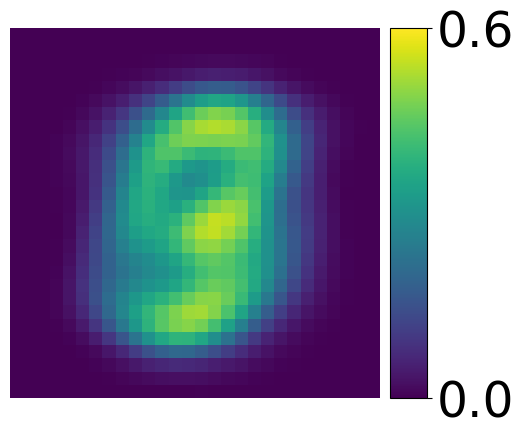}} 
    \hfill
    \raisebox{-0.5\height}{\includegraphics[width=0.10\textwidth]{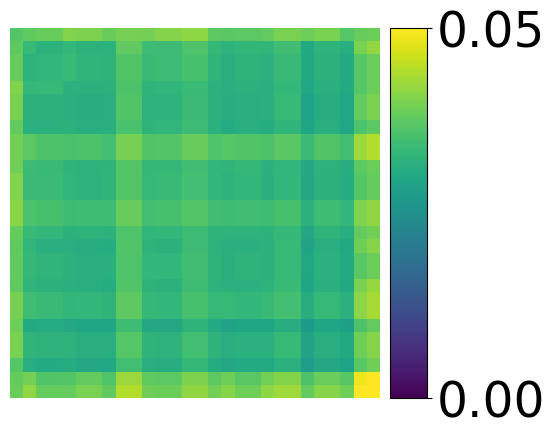}}\\
    \raisebox{-0.5\height}{\includegraphics[width=0.76\textwidth]{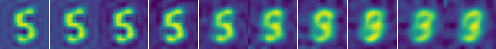}}
    \hfill
    \raisebox{-0.5\height}{\includegraphics[width=0.10\textwidth]{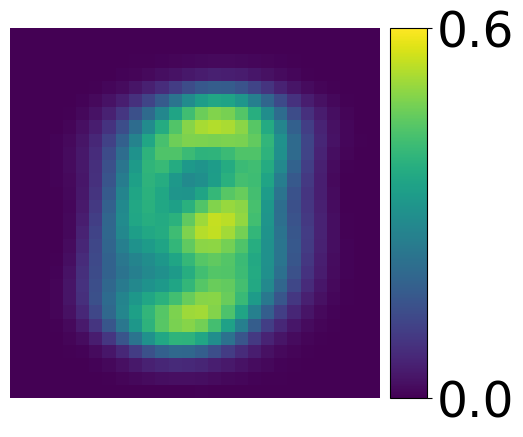}} 
    \hfill
    \raisebox{-0.5\height}{\includegraphics[width=0.10\textwidth]{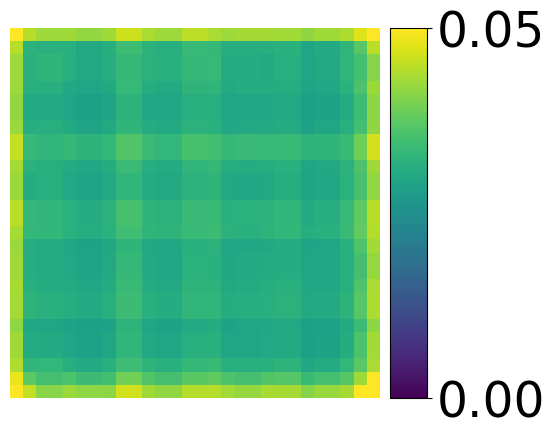}}\\
    \raisebox{-0.5\height}{\includegraphics[width=0.76\textwidth]{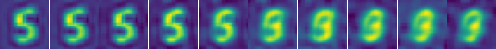}}
    \hfill
    \raisebox{-0.5\height}{\includegraphics[width=0.10\textwidth]{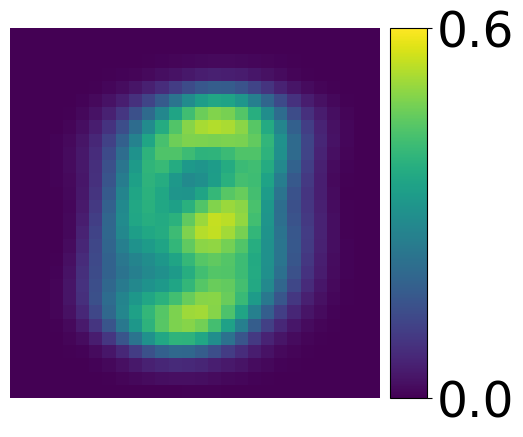}} 
    \hfill
    \raisebox{-0.5\height}{\includegraphics[width=0.10\textwidth]{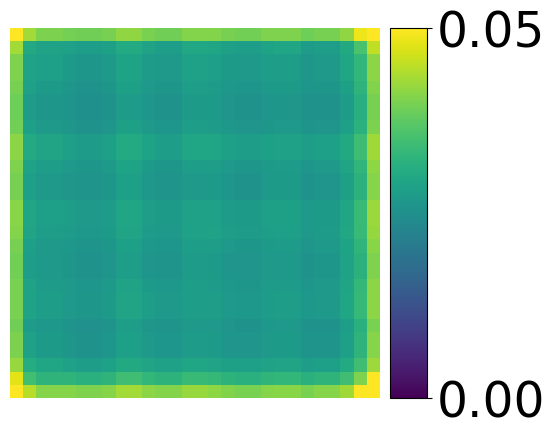}}\\
    \raisebox{-0.5\height}{\includegraphics[width=0.76\textwidth]{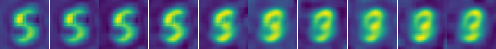}}
    \hfill
    \raisebox{-0.5\height}{\includegraphics[width=0.10\textwidth]{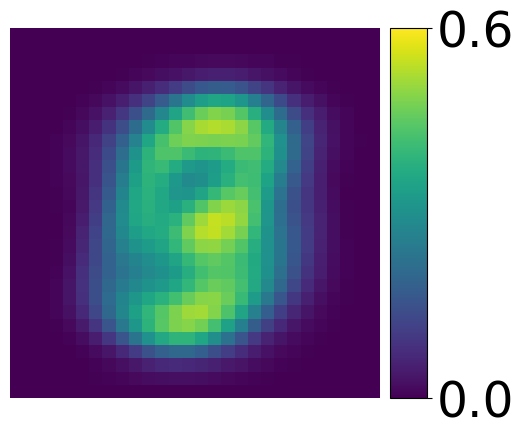}} 
    \hfill
    \raisebox{-0.5\height}{\includegraphics[width=0.10\textwidth]{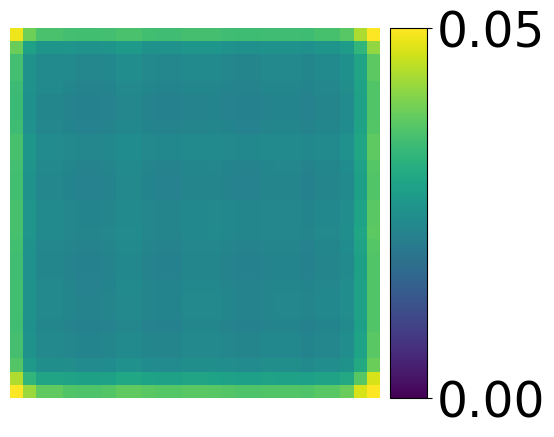}}\\
    \caption{Forward noising process of our model on MNIST digits using RBF kernel with lengthscales in [1,2,3,4,5] [top to bottom respectively]. Rightmost columns show the pixel-wise mean and standard deviation of the reference measure respectively.}
    \label{fig:rbf_noising_trajectory}
\end{figure}

\begin{table}
    \centering
    \setlength{\tabcolsep}{0pt}
    \begin{tabular}{c c}
        \includegraphics[width=.3\textwidth]{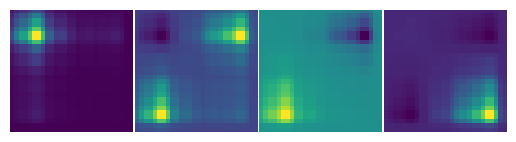} & \includegraphics[width=.3\textwidth]{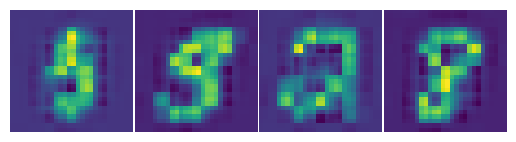} \\
        \includegraphics[width=.3\textwidth]{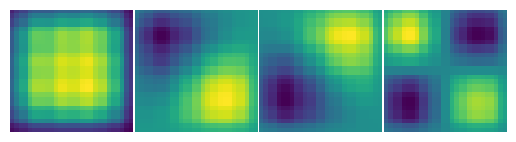} & \includegraphics[width=.3\textwidth]{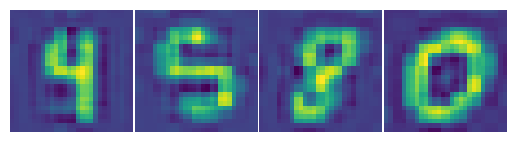} \\
        \includegraphics[width=.3\textwidth]{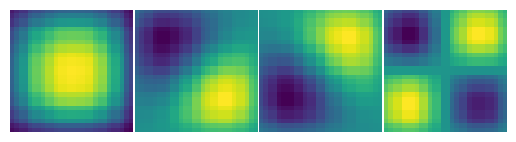} & \includegraphics[width=.3\textwidth]{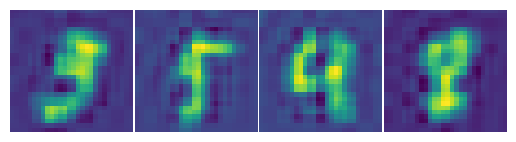} \\
        \includegraphics[width=.3\textwidth]{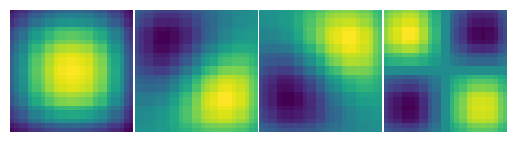} & \includegraphics[width=.3\textwidth]{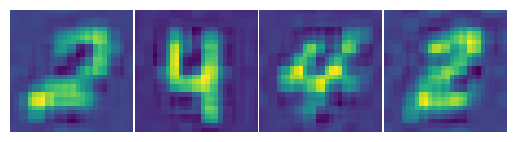}\\
        \includegraphics[width=.3\textwidth]{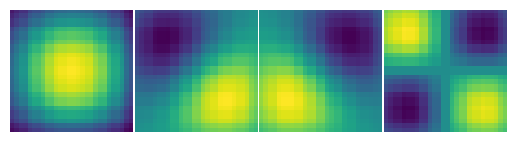} & \includegraphics[width=.3\textwidth]{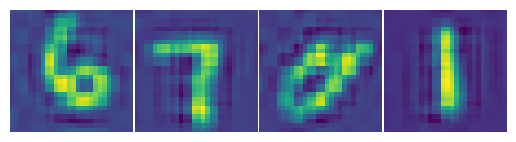} \\
    \end{tabular}
    \captionof{figure}{Eigenfunctions [left] and samples from our model [right] for Matern-3/2 kernel with lengthscale in [1,2,3,4,5] [top to bottom respectively].}
    \label{fig:matern_examples}
\end{table}

\begin{figure}
    \centering
    \raisebox{-0.5\height}{\includegraphics[width=0.76\textwidth]{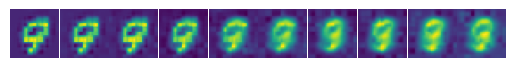}}
    \hfill
    \raisebox{-0.5\height}{\includegraphics[width=0.10\textwidth]{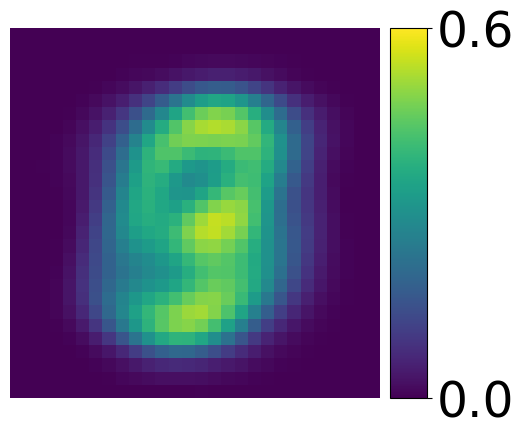}} 
    \hfill
    \raisebox{-0.5\height}{\includegraphics[width=0.10\textwidth]{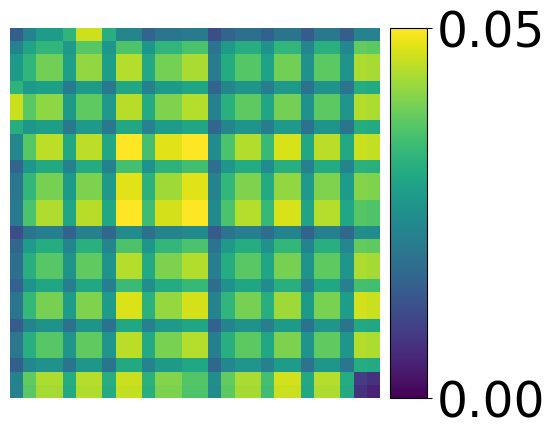}}\\
    \raisebox{-0.5\height}{\includegraphics[width=0.76\textwidth]{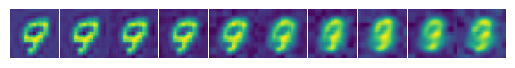}}
    \hfill
    \raisebox{-0.5\height}{\includegraphics[width=0.10\textwidth]{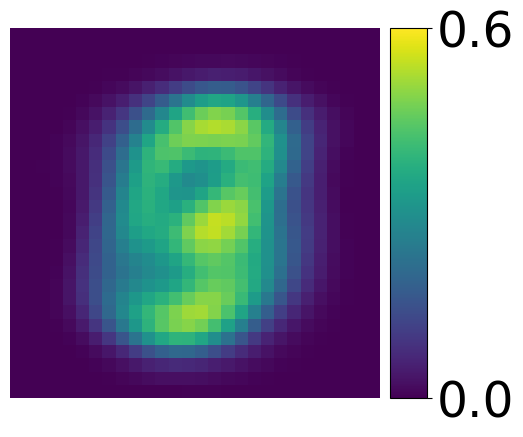}} 
    \hfill
    \raisebox{-0.5\height}{\includegraphics[width=0.10\textwidth]{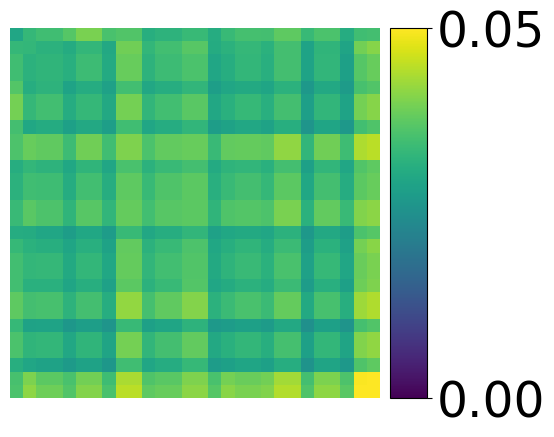}}\\
    \raisebox{-0.5\height}{\includegraphics[width=0.76\textwidth]{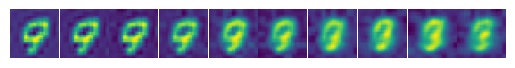}}
    \hfill
    \raisebox{-0.5\height}{\includegraphics[width=0.10\textwidth]{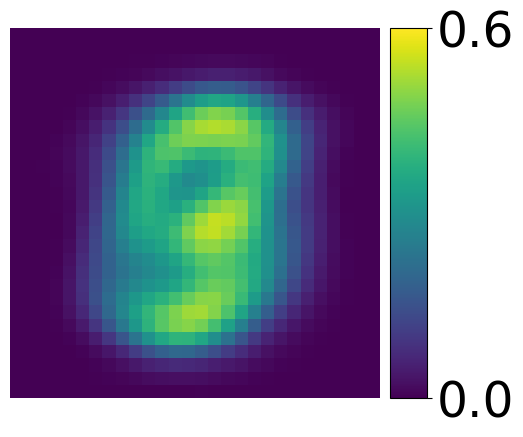}} 
    \hfill
    \raisebox{-0.5\height}{\includegraphics[width=0.10\textwidth]{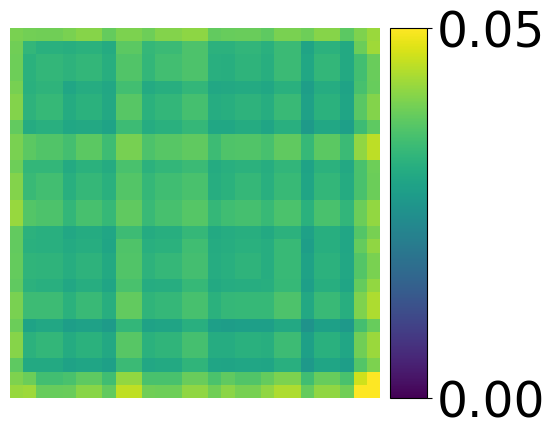}}\\
    \raisebox{-0.5\height}{\includegraphics[width=0.76\textwidth]{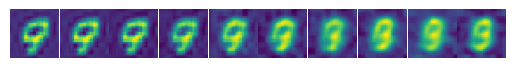}}
    \hfill
    \raisebox{-0.5\height}{\includegraphics[width=0.10\textwidth]{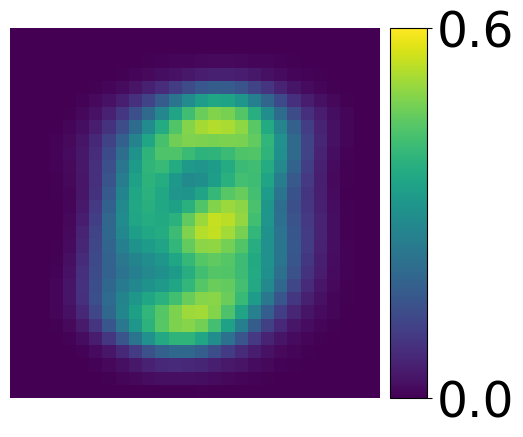}} 
    \hfill
    \raisebox{-0.5\height}{\includegraphics[width=0.10\textwidth]{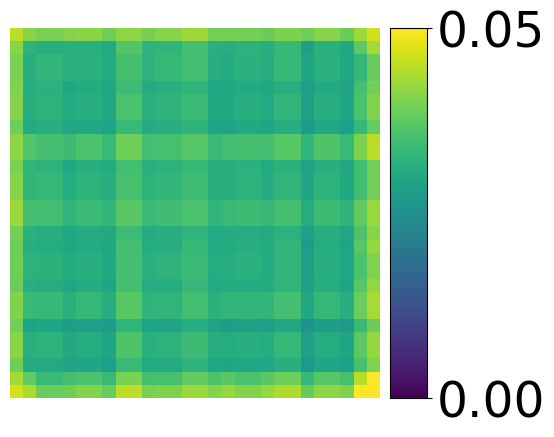}}\\
    \raisebox{-0.5\height}{\includegraphics[width=0.76\textwidth]{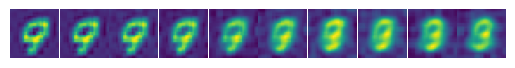}}
    \hfill
    \raisebox{-0.5\height}{\includegraphics[width=0.10\textwidth]{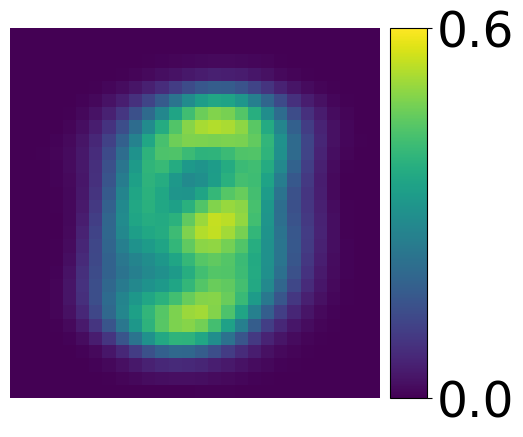}} 
    \hfill
    \raisebox{-0.5\height}{\includegraphics[width=0.10\textwidth]{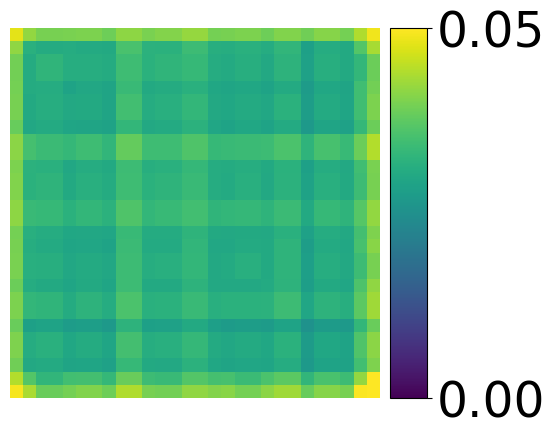}}\\
    \caption{Forward noising process of our model on MNIST digits using Matern-3/2 kernel with lengthscales in [1,2,3,4,5] [top to bottom respectively]. Rightmost columns show the pixel-wise mean and standard deviation of the reference measure respectively.}
    \label{fig:matern_noising_trajectory}
\end{figure}
\newpage 
\newpage

\end{document}